\documentclass[11pt]{article}
\usepackage{hyperref}
\usepackage{url}
\usepackage{graphicx,epstopdf}
\usepackage{amssymb,amsfonts,amsmath,amsthm,bm}
\usepackage{algorithm2e,framed,algorithmic}

\def\math#1{$#1$}

\def\v#1{{\mathbf #1}}
\def\frac#1#2{{#1\over #2}}





\def\x{{\mathbf x}}
\def\y{{\mathbf y}}

\def\a{{\mathbf a}}
\def\b{{\mathbf b}}

\def\norm#1{{\|#1\|}}

\def\r#1{{(\ref{#1})}}

\def\dotfil{\leaders\hbox to 1.5mm{.}\hfill}
\newcounter{rmnum}
\def\RN#1{\setcounter{rmnum}{#1}\uppercase\expandafter{\romannumeral\value{rmnum}}}
\def\rn#1{\setcounter{rmnum}{#1}\expandafter{\romannumeral\value{rmnum}}}


\newcommand{\TNorm }[1]{\mbox{}\left\|#1\right\|_2  }
\newcommand{\TNormS}[1]{\mbox{}\left\|#1\right\|_2^2}

\newcommand{\setlinespacing}[1]%
           {\setlength{\baselineskip}{#1 \defbaselineskip}}

\newcommand{\rank}[1]{{\bf rank}{\left(#1\right)}}
\newcommand{\abs }[1]{\left|#1\right|}

\newtheorem{lemma}{Lemma}
\newtheorem{theorem}{Theorem}


\newcommand{\mat}[1]{{\ensuremath{\bm{\mathrm{#1}}}}}

\def\rank{\hbox{\rm rank}}

\def\a{{\bm \alpha}}
\def\w{{\mathbf w}}

\def\ta{\tilde{\bm \alpha}}
\def\tw{\tilde{\mathbf w}}
\def\b{{\mathbf b}}
\def\e{{\mathbf e}}

\def\x{{\mathbf x}}

\def\v{{\mathbf v}}

\def\matA{\mat{A}}
\def\matB{\mat{B}}

\def\matD{\mat{D}}
\def\matE{\mat{E}}
\def\matG{\mat{G}}

\def\matI{\mat{I}}

\def\matP{\mat{P}}
\def\matQ{\mat{Q}}
\def\matR{\mat{R}}

\def\matS{\mat{S}}

\def\matU{\mat{U}}
\def\matV{\mat{V}}

\def\matX{\mat{X}}
\def\matY{\mat{Y}}

\def\matSig{\mat{\Sigma}}

\addtolength{\textwidth}{1.0in}
\addtolength{\oddsidemargin}{-0.5in}
\addtolength{\evensidemargin}{-0.5in}
\addtolength{\topmargin}{-0.5in}
\addtolength{\textheight}{1.4in}

\title{Feature Selection for Linear SVM with Provable Guarantees}

\author{
Saurabh Paul  
\thanks{Computer Science Department, 
Rensselaer Polytechnic Institute,
Troy, NY, USA, 
\texttt{pauls2@rpi.edu}
} 
\and Malik Magdon-Ismail 
\thanks {Computer Science Department, 
Rensselaer Polytechnic Institute, 
Troy, NY, USA
\texttt{magdon@cs.rpi.edu} 
}
\and Petros Drineas 
\thanks {Computer Science Department, 
Rensselaer Polytechnic Institute, 
Troy, NY, USA, 
\texttt{drinep@cs.rpi.edu} 
}
}

\begin{document}

\maketitle

\begin{abstract}
We give two
\emph{provably} accurate  
feature-selection techniques for the linear SVM. The algorithms run in deterministic and randomized time respectively.
Our algorithms can be used in an unsupervised or supervised setting. 
The supervised approach is based on sampling features from 
support vectors.
We prove that the margin in the feature space is preserved 
to within $\epsilon$-relative error of the margin in the full 
feature space in the worst-case.
In the unsupervised setting, 
we also provide worst-case guarantees of the radius of the minimum enclosing ball, thereby
ensuring comparable generalization as in the full feature space and 
resolving an open problem posed in \cite{Dasgup07}.
We present extensive experiments on 
real-world datasets to support our theory 
and to demonstrate that our method is competitive and often 
better than prior state-of-the-art, for which there are no known provable 
guarantees.
\end{abstract}

\section{Introduction}
The linear Support Vector Machine (SVM) 
is a popular classification method \cite{Chris00}.
Few theoretical
results exist for feature selection with SVMs.
Empirically, numerous feature selection techniques work well
(e.g. \cite{Guyon02,LPSVM}).
We present a \emph{deterministic} and a \emph{randomized}
feature selection technique for the
linear SVM  with a \textit{provable worst-case performance guarantee} on the
margin.
The feature selection is unsupervised if features are selected 
obliviously to the data labels; otherwise, it is supervised.
Our algorithms can be used in an unsupervised or supervised setting.
In the unsupervised setting, our algorithm selects a number of features
proportional to the rank of the data and preserves both
 the margin and radius of minimum enclosing ball 
to within $\epsilon$-relative error in the worst-case, thus resolving an open problem 
posed in \cite{Dasgup07}. 
In the supervised setting,
our algorithm selects \math{O(\# \text{support vectors})} features
using 
only the set of support vectors, and 
preserves the margin for the support vectors
  to within $\epsilon$-relative error in the worst-case.\\

\noindent
{\bf SVM basics.}
The training
data has 
 $n$ points $\x_i \in \mathbb{R}^d$,
with respective labels $y_i \in \{-1,+1\}$ for $i=1\ldots n$.
For linearly separable data, the primal SVM learning problem
constructs a hyperplane
\math{\w^*} which maximizes the geometric
\emph{margin}
(the minimum distance of a data point to the
hyperplane), while separating the data.
For non-separable data the ``soft'' 1-norm margin is maximized.
The dual lagrangian formulation of the soft 1-norm SVM 
reduces to the following quadratic program:
\begin{equation}
\begin{array}{rl}
\max\limits_{\alpha_i }:&\sum\limits_{i=1}^n \alpha_i  - \frac{1}{2} \sum\limits_{i,j=1}^n \alpha_i \alpha_j y_i y_j \x_i^T\x_j \\[10pt]
\text{subject to:}&\sum\limits_{i=1}^{n}y_i \alpha_i = 0;\quad
0 \leq  \alpha_i \leq C, \quad i=1\ldots n.
\end{array}
\label{eqn:svm1}
\end{equation}
 The regularizer $C$ is part of the input and the hyperplane classifier 
can be constructed from the~\math{\alpha_i}.
The out-of-sample performance is related to the
\math{VC}-dimension of the resulting ``\emph{fat}''-separator. 
Assuming that the data lie in a
ball of radius \math{B}, and that the hypothesis set consists of hyperplanes of width \math{\gamma} (the margin), then the
\math{VC}-dimension is \math{O(B^2/\gamma^2)} (\cite{Vapnik71}).
Thus, by the \math{VC}-bound (\cite{Vapnik98}),
the out-of-sample error is bounded by the in-sample error and a term
monotonic in \math{B^2/\gamma^2}.\\

\noindent
{\bf Our Contributions.} 
We give two 
provably accurate feature selection techniques 
for linear SVM in both unsupervised and supervised settings 
with worst-case performance guarantees on the margin.
We use the single set spectral sparsification technique from \cite{BSS09}
as our deterministic algorithm (the algorithm runs in deterministic time, hence the name `deterministic') and leverage-score sampling (\cite{Dasgup07}) as the randomized algorithm.
We give a new simple method of extending unsupervised feature selection 
to supervised in the context of SVMs by running the unsupervised
technique on the support vectors.
This allows us to select only \math{O(\# \text{support vectors})}
features for the deterministic algorithm (\math{\tilde{O}(\# \text{support vectors})} features for the randomized algorithm, where $\tilde{O}$ hides the logarithmic factors) while still preserving the margin on the support vectors. Since the
support vectors are a sufficient statistic for learning a linear SVM, preserving
the margin on the support vectors should be enough for learning on all the data
with the sampled feature set.

More formally, let \math{\gamma^*} be the optimal margin for the support
vector set (which is also the optimal margin for all the data). The optimal
margin \math{\gamma^*} is obtained  by solving the SVM optimization problem 
using all the features. For a suitably chosen number of features $r$, let 
\math{\tilde\gamma^*} be the optimal margin obtained by 
solving the SVM optimization problem using the support vectors 
in the sampled feature space. We prove that the margin is 
preserved to within \math{\epsilon}-relative error:
$\tilde\gamma^{*2}\ge \left(1-\epsilon \right)\gamma^{*2}.$
For the deterministic algorithm, the number of features required is \math{r=O(\# \text{support vectors}/\epsilon^2)}, whereas the randomized algorithm requires \math{r=\tilde{O}(\# \text{support vectors}/\epsilon^2)} features to be selected. 

In the unsupervised setting, by running our algorithm on all the data, instead
of only the support vectors, we get a stronger result statistically, but
using more features. The deterministic algorithm requires $O(\rho/\epsilon^2)$ features to be selected, while the randomized algorithm requires $O(\rho/\epsilon^2 \log(\rho/\epsilon^2))$ features to be selected. 
Again, defining 
\math{\tilde\gamma^*} as the optimal margin obtained by 
solving the SVM optimization problem using all the data  
in the sampled feature space, we prove that 
\math{\tilde\gamma^{*2}\ge \left(1-\epsilon \right)\gamma^{*2}.}
We can now also prove that the data radius is preserved,
$
\tilde B^2 \le \left(1+\epsilon \right) B^2.$
This means that \math{B^2/\gamma^{*2}} is preserved to within 
\math{\epsilon}-relative error, which means that the generalisation error is 
also preserved to within \math{\epsilon}-relative error. 
The rank of the data is the effective dimension of the data, and one can construct this many combinations of pure features that preserve the geometry of the SVM exactly. What makes our result non-trivial is that we
select this many pure features and preserve the geometry of the
SVM to within \math{\epsilon}-relative error.

On the practical side, we provide an efficient
 heuristic for our supervised feature selection using BSS
 which allows our 
algorithm to scale-up to large datasets. 
While the main focus of this paper is theoretical, we compare both supervised and unsupervised versions of feature selection using single-set spectral sparsification and leverage-score sampling with the corresponding supervised and unsupervised forms of Recursive Feature Elimination (RFE) (\cite{Guyon02}), LPSVM (\cite{LPSVM}), uniform sampling and rank-revealing QR factorization (RRQR) based method of column selection. Feature selection based on the single-set spectral sparsification and leverage-score sampling technique is competitive and often better than RFE and LPSVM, and none of the prior art comes 
with provable performance guarantees in either the supervised or
 unsupervised setting.\\

\noindent\textbf{Related Work.}
All the prior art is heuristic in that there are no performance 
guarantees; nevertheless, they have been empirically tested against each other.
Our algorithm comes with provable bounds, and performs comparably or better
in empirical tests. We give a short survey of the prior art:
Guyon et al. \cite{Guyon02} and Rakotomamonjy \cite{Rak03} 
proposed SVM based criteria to rank features based on the weights.
Weston et al. \cite{Weston00} formulated a combinatorial optimization problem to select features by minimising \math{B^2/\gamma^2}. Weston et al. \cite{Weston03} used the zero norm to perform error minimization and feature selection in one step. A Newton based method of feature selection using
 linear programming was given in \cite{LPSVM}. Tan et al. \cite{Tan10} formulated the $\ell_0$-norm Sparse SVM using mixed integer programming. Do et al. \cite{Do09} proposed R-SVM which performs feature selection and ranking by optimizing the radius-margin bound with a scaling factor, and extend this work in
\cite{Do13} using a quadratic 
optimization problem with quadratic constraints. 
Another line of work includes the doubly regularised Support Vector Machine (DrSVM) \cite{Wang06} which uses a mixture of \math{\ell_2}-norm and \math{\ell_1}-norm penalties to solve the SVM optimization problem and perform variable selection. Subsequent works on DrSVM involve reducing the computational bottleneck (\cite{Wang08},\cite{Ye11}).
Gilad-Bachrach et al. \cite{Gilad04} formulate the margin as a function of set of features and score to sets of features according to the margin induced.
Park et al. \cite{Park} studied the Fisher consistency and oracle property of penalized SVM where the dimension of inputs is fixed and showed that their method is able to identify the right model in most cases.\\
Paul et al. \cite{Paulaistats, Paultkdd} used random projections on linear SVM and showed that the margin and data-radius are preserved. However, this is different from our work, since they used linear combinations of features and we select pure features.\\
BSS and leverage-score sampling have been used to select features for \math{k}-means 
(\cite{Bouts13,Bouts09}), regularized least-squares classifier (\cite{Dasgup07, Paul14}). Our work further expands research into 
sparsification algorithms for machine learning.

\section{Background}
\label{subsec:notn}
\noindent\textbf{Notation:} $\matA, \matB, \ldots$ denote matrices and $\a, \b, \ldots$ denote column vectors; $\e_i$ (for all $i=1\ldots n$) is the standard basis, whose dimensionality will be clear from context; and $\matI_n$ is the  $n \times n$ identity matrix. The Singular Value Decomposition (SVD) of a matrix $\matA \in \mathbb{R}^{n \times d}$ of rank $\rho \leq \min\left\{n,d\right\}$ is equal to %
$ \matA = \matU \matSig \matV^T,$
where $\matU \in \mathbb{R}^{n \times \rho}$ is an orthogonal matrix containing the left singular vectors, $\matSig \in \mathbb{R}^{\rho \times \rho}$ is a diagonal matrix containing the singular values $\sigma_1 \geq \sigma_2  \geq \ldots \sigma_{\rho} > 0$, and $\matV \in \mathbb{R}^{d \times \rho}$ is a matrix containing the right singular vectors.
The spectral norm of $\matA$ is  $\TNorm{\matA} = \sigma_1$. 

\noindent\textbf{Matrix Sampling Formalism:}  Let $\matA$ be the data matrix consisting of $n$ points and $d$ dimensions, $\matS \in \mathbb{R}^{d\times r}$ be a matrix such that $\matA\matS \in \mathbb{R}^{n\times r}$ contains $r$ columns of $\matA$ ($\matS$ is a sampling matrix as it samples $r$ columns of $\matA).$
 Let $\matD \in \mathbb{R}^{r\times r}$ be the diagonal matrix such that $\matA\matS\matD \in \mathbb{R}^{n\times r}$ rescales the columns of $\matA$ that are in $\matA\matS.$ We will replace the sampling and re-scaling matrices by a single matrix $\matR \in \mathbb{R}^{d\times r}$, where $\matR = \matS\matD$ 
first samples and then rescales $r$ columns of $\matA$.

Let $\matX$ be a generic data matrix in \math{d} dimensions
whose rows are data vectors $\x_i^T$, and let $\matY$ be the  
diagonal label matrix whose diagonal entries are the labels, 
$\matY_{ii}=y_{i}$.
Let $\a = \left[\alpha_1,\alpha_2,\ldots,\alpha_n\right] \in \mathbb{R}^n$ be
the vector of lagrange multipliers to be determined by solving eqn.~\r{eqn:svm1A}. In matrix form, the SVM dual optimization
problem is
\begin{equation}
\begin{array}{rl}
 \max_{\a }:& \bm{1}^T\a-\frac12 \a^T \matY \matX \matX^T \matY\a \\[5pt]
\text{subject to:}&  \bm 1^T \matY \a = 0;  \qquad
\bm{0} \le\a \le \bm{C}.
\end{array}
\label{eqn:svm1A}
\end{equation}
\noindent (In the above, \math{\bm1,\ \bm0,\ \bm{C}} are  vectors with the implied constant entry.) 
When the data and label
 matrices contain all the data, we will emphasize this using the
notation 
\math{\matX^{\textbf{tr}}\in\mathbb{R}^{n\times d},\ \matY^{\textbf{tr}}
\in\mathbb{R}^{n\times n}}.
Solving~\r{eqn:svm1A} with these full data matrices gives a 
solution \math{\dot{\bm\alpha}^*}.
The data $\x_i$ for which \math{\dot{\alpha}_i^*>0} are support 
vectors and we denote by \math{\matX^{\textbf{sv}}
\in\mathbb{R}^{p\times d},\ \matY^{\textbf{sv}}\in\mathbb{R}^{p\times p}}
the data and label
 matrices containing only the \math{p} support vectors.
Solving \r{eqn:svm1A} with \math{(\matX^{\textbf{tr}},\matY^{\textbf{tr}})}
or \math{(\matX^{\textbf{sv}},\matY^{\textbf{sv}})} result in the same
classifier. Let 
 $\a^{*}$ be the solution to \r{eqn:svm1A} for the 
support vector data.
The optimal separating hyperplane is 
 $\w^* =  (\matX^{\textbf{tr}})^T\matY^{\textbf{tr}}\dot{\a}^{*}= (\matX^{\textbf{sv}})^T\matY^{\textbf{sv}} \a^{*}$, where  $\matX^{\textbf{sv}}$ is the support vector matrix.
The geometric margin is \math{\gamma^*=1/\TNorm{\w^*}}, where
$\TNormS{\w^*} = \sum_{i=1}^n \alpha_i^*$.
The data radius is \math{B=\min_{\x^*}\max_{\x_i} \TNorm{\x_i-\x^*}}.

Our goal is to study how the SVM performs when run in the sampled feature
space. 
Let \math{\matX,\ \matY} be data and label matrices 
(such as those above)
and 
\math{\matR\in\mathbb{R}^{d\times r}} a sampling and rescaling matrix
which selects \math{r} columns of~\math{\matX}.
The transformed dataset into the \math{r} selected
features is \math{\tilde\matX=\matX\matR}, 
and the SVM optimization problem in this feature space becomes
\begin{equation}
\begin{array}{rl}
\max_{\hat\a }:&\bm{1}^T\hat\a-\frac12\hat\a^T\matY\matX\matR\matR^T
\matX^T \matY\hat\a, \\[5pt]
\text{subject to:}&  \bm1^T\matY\hat\a=0; 
\qquad \bm0\le\hat\a\le \bm{C}.
\end{array}
\label{eqn:svm3}
\end{equation}
For the supervised setting, we select features from the support 
vector matrix and so we set $\matX=\matX^{\textbf{sv}}$ and
\math{\matY=\matY^{\textbf{sv}}} and we select \math{r_1\ll d} features using 
\math{\matR}.
For the unsupervised setting, we select features from the full data 
matrix and so we set $\matX=\matX^{\textbf{tr}}$ and
\math{\matY=\matY^{\textbf{tr}}} and we select \math{r_2\ll d} features using 
\math{\matR}.

\section{Our main tools}
\label{sec:main_tool}
In this section, we describe our main tools of feature selection from the numerical linear algebra literature, namely single-set spectral sparsification and leverage-score sampling.\\

\noindent
\textbf{Single-set Spectral Sparsification.} The BSS algorithm (\cite{BSS09}) is  a deterministic
greedy technique that selects columns one at a time. The algorithm samples $r$ columns in deterministic time, hence the name deterministic sampling. Consider the input matrix as a set of $d$ column vectors $\matV^T = \left[ \v_1, \v_2,....,\v_d \right]$, with $\v_i \in \mathbb{R}^\ell \left(i = 1,..,d\right).$ Given $\ell$ and $r>\ell$, we iterate over $\tau = 0,1,2,.. r-1$. Define the parameters $L_{\tau} = \tau - \sqrt{r\ell}, \delta_L = 1, \delta_U = \left(1+\sqrt{\ell/r}\right)/\left(1 - \sqrt{\ell/r}\right)$ and $U_{\tau} = \delta_U\left(\tau + \sqrt{\ell r}\right)$. For $U, L \in \mathbb{R}$ and $\matA \in \mathbb{R}^{\ell\times \ell}$ a symmetric positive definite matrix with eigenvalues $\lambda_1, \lambda_2,...,\lambda_\ell$, define 
$\Phi\left(L,\matA\right) = \sum_{i=1}^\ell \frac{1}{\lambda_i-L}$ and 
$\hat{\Phi}\left(U,\matA\right) =  \sum_{i=1}^\ell \frac{1}{U-\lambda_i}$
as the lower and upper potentials respectively. These potential functions measure how far the eigenvalues of $\matA$ are from the upper and lower barriers $U$ and $L$ respectively. We define $\mathcal{L}\left(\v, \delta_L, \matA, L\right)$  and $\mathcal{U}\left(\v, \delta_U, \matA, U\right)$  as follows:

$$ \mathcal{L}\left(\v, \delta_L, \matA, L\right)=
\frac{\v^T \left(\matA - \left(L+\delta_L\right)\matI_\ell\right)^{-2}\v}{\Phi\left(L+\delta_L,\matA\right) - \Phi\left(L,\matA\right)} - \v^T\left(\matA - \left(L+\delta_L\right)\matI_\ell\right)^{-1}\v$$
$$\mathcal{U}\left(\v, \delta_U, \matA, U\right)=
\frac{\v^T \left(\left(U+\delta_U\right)\matI_\ell - \matA\right)^{-2}\v}{ \hat{\Phi}\left(U,\matA\right)-\hat{\Phi}\left(U+\delta_U,\matA\right)} + \v^T\left(\left(U+\delta_U\right)\matI_\ell - \matA\right)^{-1}\v.$$ 
At every iteration, there exists an index $i_{\tau}$ and a weight $t_{\tau}>0$ such that, ${t_{\tau}}^{-1}\leq\mathcal{L}\left(\v_i, \delta_L, \matA, L\right)$ and 
${t_{\tau}}^{-1}\geq \mathcal{U}\left(\v_i, \delta_U, \matA, U\right).$ Thus, there will be at most $r$ columns selected after $\tau$ iterations. The running time of the algorithm is dominated by the search for an index $i_\tau$ satisfying $ \mathcal{U}\left(\v_i,\delta_U,\matA_{\tau},U_{\tau} \right) \leq \mathcal{L}\left(\v_i,\delta_L, \matA_{\tau},L_{\tau} \right)$ and computing the weight $t_{\tau}.$ One needs to compute the upper and lower potentials $\hat{\Phi}\left(U,\matA\right)$ and $\Phi\left(L,\matA\right)$ and hence the eigenvalues of $\matA$. Cost per iteration is $O\left(\ell^3\right)$ and the total cost is $O\left(r\ell^3\right).$ For $i=1,..,d$, we need to compute $\mathcal{L}$ and $\mathcal{U}$ for every $\v_i$ which can be done in $O\left(d\ell^2 \right)$ for every iteration, for a total
of $O\left(rd\ell^2 \right).$ Thus total running time of the algorithm is  $O\left(rd\ell^2 \right).$ We include the algorithm as Algorithm~\ref{alg:alg_ssp}.

\begin{algorithm}[!htb]
\begin{framed}
\textbf{Input:} $\matV^T = [ \v_1, \v_2, ... \v_d ] \in \mathbb{R}^{\ell \times d}$ with $\v_i \in \mathbb{R}^{\ell}$ and $r>\ell$. \\
\textbf{Output:} Matrices $\matS \in \mathbb{R}^{d\times r}, \matD \in \mathbb{R}^{r\times r}$.\\
%
1. Initialize $\matA_0 = \mathbf{0}_{\ell \times \ell}$, $\matS =\mathbf{0}_{d\times r}, \matD =\mathbf{0}_{r\times r}$.\\
2. Set constants $\delta_L = 1$ and $\delta_U = \left(1+\sqrt{\ell/r}\right)/\left(1-\sqrt{\ell/r}\right)$. \\
3. \textbf{for} $\tau = 0$ to $r-1$ \textbf{do}
\begin{itemize}
	\item Let $L_{\tau} = \tau - \sqrt{r\ell} ; U_{\tau} = \delta_U \left(\tau+\sqrt{\ell r}\right)$.
	\item Pick index $i \in \{1,2,..d \}$ and number $t_{\tau}>0$, such that
	$$ \mathcal{U}\left(\v_i,\delta_U,\matA_{\tau},U_{\tau} \right) \leq \mathcal{L}\left(\v_i,\delta_L, \matA_{\tau},L_{\tau} \right). $$
	\item Let $ t_{\tau}^{-1} = \frac{1}{2} \left( \mathcal{U}\left(\v_i,\delta_U,\matA_{\tau},U_{\tau} \right)+ \mathcal{L}\left(\v_i,\delta_L,\matA_{\tau},L_{\tau} \right) \right)$
	\item Update $\matA_{\tau+1} = \matA_{\tau} + t_{\tau}\v_i\v_i^T$ ; set $\matS_{i_\tau,\tau+1}=1$ and $\matD_{\tau+1,\tau+1}=1/\sqrt{t_{\tau}}$.
\end{itemize}
4. \textbf{end for} \\
5. Multiply all the weights in $\matD$ by $\sqrt{r^{-1}\left(1-\sqrt{\left(\ell/r \right)}\right)}.$ \\
6. Return $\matS$ and $\matD.$

\end{framed}
\caption{Single-set Spectral Sparsification}
\label{alg:alg_ssp}
\end{algorithm}

We present the following lemma for the single-set spectral sparsification algorithm.

\begin{lemma}\label{lem:bss}
\textbf{BSS} (\cite{BSS09}): Given $\matV \in \mathbb{R}^{d \times \ell}$ satisfying $\matV^T\matV = \matI_\ell$ and $r>\ell$, we can deterministically construct sampling and rescaling matrices $\matS\in\mathbb{R}^{d\times r}$ and $\matD\in\mathbb{R}^{r \times r}$ with $\matR=\matS\matD$, such that, for all $\y \in \mathbb{R}^\ell :$
$ \left(1-\sqrt{\ell/r}\right)^2 \TNormS{\matV\y} \le \TNormS{\matV^T\matR\y} \le  \left( 1+ \sqrt{\ell/r} \right)^2 \TNormS{\matV\y}.$
\end{lemma}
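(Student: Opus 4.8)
This is the barrier--potential argument of \cite{BSS09}; the plan is to build the matrix $\matA_\tau = \matV^T\matR_\tau\matR_\tau^T\matV = \sum_{j<\tau} t_j\v_{i_j}\v_{i_j}^T$ one rank-one term at a time (exactly Algorithm~\ref{alg:alg_ssp}) while maintaining two invariants: (i) all eigenvalues of $\matA_\tau$ lie strictly inside the window $(L_\tau, U_\tau)$ with $L_\tau = \tau - \sqrt{r\ell}$ and $U_\tau = \delta_U(\tau + \sqrt{r\ell})$, and (ii) the potentials do not grow, $\Phi(L_\tau,\matA_\tau) \le \Phi(L_0,\mathbf 0_{\ell\times\ell}) = \sqrt{\ell/r}$ and $\hat\Phi(U_\tau,\matA_\tau) \le \hat\Phi(U_0,\mathbf 0_{\ell\times\ell}) = \delta_U^{-1}\sqrt{\ell/r}$. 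After $r$ steps these give $\lambda_{\min}(\matA_r) > r - \sqrt{r\ell} = r(1-\sqrt{\ell/r})$ and $\lambda_{\max}(\matA_r) < \delta_U r(1+\sqrt{\ell/r})$; multiplying every weight by $\sqrt{r^{-1}(1-\sqrt{\ell/r})}$ as in step~5 replaces $\matA_r$ by $r^{-1}(1-\sqrt{\ell/r})\matA_r$, whose eigenvalues therefore lie in the window $[(1-\sqrt{\ell/r})^2,\, (1-\ell/r)\delta_U] = [(1-\sqrt{\ell/r})^2,(1+\sqrt{\ell/r})^2]$ once $\delta_U = (1+\sqrt{\ell/r})/(1-\sqrt{\ell/r})$ is substituted. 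Since $\TNormS{\matV\y} = \y^T\matV^T\matV\y = \y^T\y$, the statement that every eigenvalue of $\matV^T\matR\matR^T\matV$ lies in $[(1-\sqrt{\ell/r})^2,(1+\sqrt{\ell/r})^2]$ is precisely the displayed two-sided inequality.

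The engine of the induction is a pair of ``barrier-shift'' facts, each proved by the Sherman--Morrison formula. For the upper barrier: if $\lambda_{\max}(\matA) < U$ and $t>0$ satisfies $t^{-1} \ge \mathcal{U}(\v,\delta_U,\matA,U)$, then $\lambda_{\max}(\matA + t\v\v^T) < U+\delta_U$ and $\hat\Phi(U+\delta_U,\matA+t\v\v^T) \le \hat\Phi(U,\matA)$; this follows by writing $\hat\Phi(U+\delta_U,\matA+t\v\v^T) = \hat\Phi(U+\delta_U,\matA) + t\,\v^T((U+\delta_U)\matI_\ell-\matA)^{-2}\v / (1 - t\,\v^T((U+\delta_U)\matI_\ell-\matA)^{-1}\v)$ and checking that the bound on $t$ makes the second term at most $\hat\Phi(U,\matA) - \hat\Phi(U+\delta_U,\matA)$. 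Symmetrically, if $\lambda_{\min}(\matA) > L$, $\Phi(L,\matA) \le 1/\delta_L$, and $0 < t^{-1} \le \mathcal{L}(\v,\delta_L,\matA,L)$, then $\lambda_{\min}(\matA+t\v\v^T) > L+\delta_L$ and $\Phi(L+\delta_L,\matA+t\v\v^T) \le \Phi(L,\matA)$, again by Sherman--Morrison applied to $(\matA - (L+\delta_L)\matI_\ell)^{-1}$. Consequently, any single step in which we pick an index $i$ with $\mathcal{U}(\v_i,\delta_U,\matA_\tau,U_\tau) \le \mathcal{L}(\v_i,\delta_L,\matA_\tau,L_\tau)$ and then any $t_\tau$ with $t_\tau^{-1}$ between the two values (Algorithm~\ref{alg:alg_ssp} uses the midpoint) preserves both invariants.

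It remains to show that such an index always exists, and this is where $\matV^T\matV = \matI_\ell$ enters. Summing over $i=1,\dots,d$ and using $\sum_i \v_i\v_i^T = \matV^T\matV = \matI_\ell$, every quadratic form $\sum_i \v_i^T\matM\v_i$ collapses to $\mathrm{Tr}(\matM)$; thus $\sum_i \v_i^T((U+\delta_U)\matI_\ell-\matA_\tau)^{-1}\v_i = \hat\Phi(U+\delta_U,\matA_\tau)$, $\sum_i \v_i^T((U+\delta_U)\matI_\ell-\matA_\tau)^{-2}\v_i = \mathrm{Tr}(((U+\delta_U)\matI_\ell-\matA_\tau)^{-2})$, and likewise for the lower quantities. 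Because $U\mapsto\hat\Phi(U,\matA)$ is decreasing and convex and $L\mapsto\Phi(L,\matA)$ is increasing and convex in the barrier, the secant-slope inequalities give $\mathrm{Tr}(((U_\tau+\delta_U)\matI_\ell-\matA_\tau)^{-2})\,/\,(\hat\Phi(U_\tau,\matA_\tau) - \hat\Phi(U_\tau+\delta_U,\matA_\tau)) \le 1/\delta_U$ and, invoking invariant (ii), $\mathrm{Tr}((\matA_\tau - (L_\tau+\delta_L)\matI_\ell)^{-2})\,/\,(\Phi(L_\tau+\delta_L,\matA_\tau) - \Phi(L_\tau,\matA_\tau)) \ge 1/\delta_L$. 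Combining, $\sum_i \mathcal{U}(\v_i,\delta_U,\matA_\tau,U_\tau) \le \delta_U^{-1} + \hat\Phi(U_\tau,\matA_\tau)$ and $\sum_i \mathcal{L}(\v_i,\delta_L,\matA_\tau,L_\tau) \ge \delta_L^{-1} - \Phi(L_\tau+\delta_L,\matA_\tau)$. The choices $\delta_L=1$, $\delta_U=(1+\sqrt{\ell/r})/(1-\sqrt{\ell/r})$ together with the starting offsets $\pm\sqrt{r\ell}$ are calibrated precisely so that the $\mathcal{L}$ lower bound is at least the $\mathcal{U}$ upper bound for every $\tau$ (this is where $r>\ell$ is used, so that $\sqrt{\ell/r}<1$, $\delta_U>0$, and the barriers remain ordered); hence $\sum_i \mathcal{U}(\v_i,\cdot) \le \sum_i \mathcal{L}(\v_i,\cdot)$ and some $i$ satisfies $\mathcal{U}(\v_i,\cdot) \le \mathcal{L}(\v_i,\cdot)$, which closes the induction.

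The main obstacle is that last paragraph. The Sherman--Morrison bookkeeping and the trace collapse are routine, but verifying that the specific $\delta_U$ and the offsets $\sqrt{r\ell}$ make the averaged inequality $\delta_U^{-1}+\hat\Phi(U_\tau,\matA_\tau) \le \delta_L^{-1}-\Phi(L_\tau+\delta_L,\matA_\tau)$ hold at \emph{every} step --- and that invariant (ii) in the form $\Phi(L_\tau,\matA_\tau)\le 1/\delta_L$ is self-sustaining, which is exactly what lets the lower-barrier fact keep applying and keeps $L_{\tau+1}$ strictly below the spectrum --- is the delicate constant-chasing that makes the BSS construction tight. Once that is in place, reading off the eigenvalue window of $\matA_r$, rescaling by step~5, and rewriting via $\TNormS{\matV\y}=\y^T\y$ is immediate.
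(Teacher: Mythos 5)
Your outline is a faithful reconstruction of the barrier--potential argument of Batson--Spielman--Srivastava, which is exactly what the paper relies on: Lemma~\ref{lem:bss} is imported from \cite{BSS09} with no proof of its own, and the algorithm you analyze is precisely Algorithm~\ref{alg:alg_ssp} as transcribed in Section~\ref{sec:main_tool} (your initial potentials $\sqrt{\ell/r}$ and $\delta_U^{-1}\sqrt{\ell/r}$, the post-rescaling eigenvalue window $[(1-\sqrt{\ell/r})^2,(1+\sqrt{\ell/r})^2]$, and the averaging step via $\sum_i\v_i\v_i^T=\matI_\ell$ all check out). The one thing to note is that your write-up is a plan rather than a complete proof --- the self-sustaining inequality $\delta_U^{-1}+\hat\Phi(U_\tau,\matA_\tau)\le\delta_L^{-1}-\Phi(L_\tau+\delta_L,\matA_\tau)$, which you correctly isolate as the delicate step, is asserted rather than verified --- but the approach and all computed constants agree with the cited source.
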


We now present a slightly modified version of Lemma~\ref{lem:bss} for our theorems.

\begin{lemma} \label{lem:ssp-app}
Given $\matV \in \mathbb{R}^{d \times \ell}$ satisfying $\matV^T\matV = \matI_\ell$ and $r>\ell$, we can deterministically construct sampling and rescaling matrices $\matS \in \mathbb{R}^{d \times r}$ and $\matD \in \mathbb{R}^{r \times r}$ such that for $\matR= \matS\matD$, $\TNorm{ \matV^T\matV - \matV^T\matR\matR^T\matV} \leq 3\sqrt{\ell/r}$
\end{lemma}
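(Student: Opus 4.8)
The plan is to deduce the lemma directly from Lemma~\r{lem:bss} by rephrasing its conclusion as a spectral statement about the symmetric matrix $\matV^T\matR\matR^T\matV$. First I would invoke Lemma~\r{lem:bss} with the same $\matV$ and $r$ to obtain $\matS,\matD$ and $\matR=\matS\matD$. Since $\matV^T\matV=\matI_\ell$, we have $\TNormS{\matV\y}=\y^T\matV^T\matV\y=\TNormS{\y}$ for every $\y\in\mathbb{R}^\ell$, so the BSS guarantee (whose middle term $\TNormS{\matV^T\matR\y}$ I read as $\TNormS{\matR^T\matV\y}=\y^T\matV^T\matR\matR^T\matV\y$, the only dimensionally consistent reading) becomes
$$\left(1-\sqrt{\ell/r}\right)^2\TNormS{\y}\;\le\; \y^T\matV^T\matR\matR^T\matV\y\;\le\;\left(1+\sqrt{\ell/r}\right)^2\TNormS{\y}.$$
By the variational (Rayleigh quotient) characterization of eigenvalues of a symmetric matrix, this is exactly the statement that every eigenvalue $\lambda$ of $\matV^T\matR\matR^T\matV$ satisfies $\left(1-\sqrt{\ell/r}\right)^2\le\lambda\le\left(1+\sqrt{\ell/r}\right)^2$.

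Next I would pass to the difference matrix $\matI_\ell-\matV^T\matR\matR^T\matV$, which is symmetric with eigenvalues $1-\lambda$. Writing $a=\sqrt{\ell/r}\in(0,1)$ (using $r>\ell$), the bounds above give $1-(1+a)^2\le 1-\lambda\le 1-(1-a)^2$, i.e.\ $-2a-a^2\le 1-\lambda\le 2a-a^2$. Since the spectral norm of a symmetric matrix equals the largest absolute value of its eigenvalues, $\TNorm{\matV^T\matV-\matV^T\matR\matR^T\matV}=\TNorm{\matI_\ell-\matV^T\matR\matR^T\matV}\le\max\{\,2a+a^2,\;|2a-a^2|\,\}=2a+a^2$. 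Finally, because $r>\ell$ forces $a^2=\ell/r<\sqrt{\ell/r}=a$, we get $2a+a^2<3a=3\sqrt{\ell/r}$, which is the claimed bound; the matrices $\matS,\matD$ are precisely the ones delivered by Lemma~\r{lem:bss}, so no additional construction is required.

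There is essentially no obstacle here. The only things to be careful about are the elementary translation between the quadratic-form inequality of Lemma~\r{lem:bss} and a statement about eigenvalues (where $\matV^T\matV=\matI_\ell$ is used to normalize $\TNormS{\matV\y}$ to $\TNormS{\y}$), and the crude but sufficient estimate $\ell/r\le\sqrt{\ell/r}$ that absorbs the quadratic term, converting $2\sqrt{\ell/r}+\ell/r$ into the convenient form $3\sqrt{\ell/r}$ used downstream. One could alternatively keep the sharper bound $2\sqrt{\ell/r}+\ell/r$, but nothing later in the paper needs it.
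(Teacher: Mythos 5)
Your proof is correct and takes essentially the same route as the paper's: both translate the two-sided quadratic-form guarantee of Lemma~\ref{lem:bss} into eigenvalue bounds $\left(1-\sqrt{\ell/r}\right)^2\le\lambda\le\left(1+\sqrt{\ell/r}\right)^2$ on $\matV^T\matR\matR^T\matV$, then bound the eigenvalues of $\matI_\ell-\matV^T\matR\matR^T\matV$ and absorb the quadratic term via $\ell/r\le\sqrt{\ell/r}$. Your write-up is in fact slightly more careful about signs (the paper's displayed lower bound $\lambda_{\min}\ge 3\sqrt{\ell/r}$ is a typo for $\ge -3\sqrt{\ell/r}$), and your intermediate bound $2\sqrt{\ell/r}+\ell/r$ is marginally sharper, but these are cosmetic differences.
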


\begin{proof}
From Lemma~\ref{lem:bss}, it follows,
$\sigma_\ell\left(\matV^T\matR\matR^T\matV\right)\ge\left(1-\sqrt{\ell/r}\right)^2,$  
$\sigma_1 \left(\matV^T\matR\matR^T\matV\right)\le\left(1+\sqrt{\ell/r}\right)^2.$ 
Thus,
$\lambda_{max}\left(\matV^T\matV-\matV^T\matR\matR^T\matV\right)\le\left(1-\left(1-\sqrt{\ell/r}\right)^2\right)\le 2\sqrt{\ell/r}.$
Similarly,
$\lambda_{min} \left(\matV^T\matV - \matV^T\matR\matR^T\matV  \right) \ge \left( 1 - \left(1 +\sqrt{\ell/r}\right)^2 \right)  \ge 3\sqrt{\ell/r}.$
Combining these two results, we have
$\TNorm{\matV^T\matV - \matV^T\matR\matR^T\matV} \leq 3\sqrt{\ell/r}.$
\end{proof}

\noindent
\textbf{Leverage-Score Sampling.}
Our randomized feature selection method is based on importance sampling or the so-called leverage-score sampling of \cite{Dasgup07}. Let $\matV$ be the top-$\rho$ right singular vectors of the training set $\matX$. A carefully chosen probability distribution of the form 
\vskip -0.2cm
\begin{equation}
p_i = \frac{\TNormS{\matV_{i}}}{n}, \text{ for } i=1,2,...,d, 
\label{eqn:eqnlvg}
\end{equation}
i.e. proportional to the squared Euclidean norms of the rows of the right-singular vectors is constructed. Select $r$ rows of $\matV$ in i.i.d trials and re-scale the rows with $1/\sqrt{p_i}$. The time complexity is dominated by the time to compute the SVD of $\matX$.

\begin{lemma} \label{lem:lvgscr}\vskip -0.1cm
Let $\epsilon \in(0,1/2]$ be an accuracy parameter and $\delta \in(0,1)$ be the failure probability. Given $\matV \in \mathbb{R}^{d \times \ell}$ satisfying $\matV^T\matV = \matI_\ell.$ Let $\tilde{p} = min\{1, rp_i\}$, let $p_i$ be as Eqn.~\ref{eqn:eqnlvg} and let $r = O\left(\frac{n}{\epsilon^2} \log\left(\frac{n}{\epsilon^2 \sqrt{\delta}}\right) \right)$. Construct the sampling and rescaling matrix $\matR$.  Then with probability at least 0.99, 
$ \TNorm{\matV^T\matV - \matV^T\matR^T\matR\matV} \leq \epsilon.$
\end{lemma}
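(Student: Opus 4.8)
The plan is to realize $\matV^T\matR^T\matR\matV$ as a sum of $r$ independent, rank-one, positive semidefinite random matrices and then invoke a matrix concentration inequality. Write $\matV^T\matR^T\matR\matV=\sum_{t=1}^{r}\matX_t$, where $\matX_t=\tfrac{1}{r\,\tilde p_{j_t}}\,\matV_{j_t}^T\matV_{j_t}$ is the sampled, rescaled rank-one contribution of the $t$-th selected row $j_t$, drawn with probability proportional to $p_{j_t}$ and rescaled by $1/\sqrt{r\,\tilde p_{j_t}}$. Before invoking randomness I would dispose of the truncation $\tilde p=\min\{1,rp_i\}$: a row with $rp_i\ge 1$ has $\tilde p_i=1$, is retained deterministically with unit weight, and contributes exactly $\matV_i^T\matV_i$; only the small-leverage rows are genuinely sampled, the deterministic block only shrinks the variance of the sampled block, and on the sampled block $\Expect{\matX_t}=\tfrac1r\sum_i p_i\cdot\tfrac{1}{p_i}\matV_i^T\matV_i=\tfrac1r\matV^T\matV=\tfrac1r\matI_\ell$, so the estimator is unbiased: $\Expect{\matV^T\matR^T\matR\matV}=\matI_\ell$.

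Both quantities feeding the concentration bound are immediate from the choice $p_i=\TNormS{\matV_i}/n$. The almost-sure spectral bound on a summand is $\TNorm{\matX_t}=\tfrac{\TNormS{\matV_{j_t}}}{r\,\tilde p_{j_t}}\le\tfrac{\TNormS{\matV_{j_t}}}{r\,p_{j_t}}=\tfrac{n}{r}$, and, using $(\matV_i^T\matV_i)^2=\TNormS{\matV_i}\,\matV_i^T\matV_i$, the matrix-variance proxy is $\bigl\|\sum_{t}\Expect{\matX_t^2}\bigr\|=\bigl\|\tfrac1r\sum_i\tfrac{\TNormS{\matV_i}}{p_i}\matV_i^T\matV_i\bigr\|=\tfrac{n}{r}\TNorm{\matV^T\matV}=\tfrac{n}{r}$ (centering only decreases this). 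With both quantities equal to $n/r$, matrix Bernstein gives $\Probab{\TNorm{\matV^T\matR^T\matR\matV-\matI_\ell}>\epsilon}\le 2\ell\,\exp\!\bigl(-\tfrac{\epsilon^2}{2(n/r)(1+\epsilon/3)}\bigr)$, which for $\epsilon\le\tfrac12$ is at most $2\ell\,\exp(-c\,r\epsilon^2/n)$ for an absolute constant $c$; since $\ell=\rho\le n$, taking $r=\Theta\!\bigl(\tfrac{n}{\epsilon^2}\log\tfrac{n}{\epsilon^2\sqrt\delta}\bigr)$ pushes the failure probability below any prescribed constant, in particular below $0.01$. Because each $\matX_t\succeq0$, one could equally apply the matrix Chernoff bound for sums of PSD matrices, with slightly cleaner constants.

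The moment computations above are routine; the parts that require care are (i) the bookkeeping around $\tilde p=\min\{1,rp_i\}$ — one must verify that substituting $\tilde p_i$ for $p_i$ in the rescaling keeps the estimator unbiased on the sampled rows and never enlarges either $\TNorm{\matX_t}$ or the variance proxy, which is exactly why truncation is harmless (and it is not vacuous here, since $r\gtrsim n$ and the high-leverage rows do get retained deterministically) — and (ii) matching the precise shape of the bound on $r$, notably the $\log(n/(\epsilon^2\sqrt\delta))$ factor and the ``with probability at least $0.99$'' phrasing in place of a clean ``$1-\delta$''. The $\sqrt\delta$ is the fingerprint of deriving the estimate by first bounding $\Expect{\TNorm{\matV^T\matR^T\matR\matV-\matI_\ell}}$ through the expectation (intrinsic-dimension) form of the matrix Chernoff inequality — which also removes the explicit $\ell$ — and then applying Markov's inequality; I would follow that route. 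Up to this, the argument is the standard leverage-score sampling analysis of \cite{Dasgup07}, and the only genuinely new ingredient is the truncation accounting.
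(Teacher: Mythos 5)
The paper never proves this lemma; it is imported from \cite{Dasgup07} as a black box, so there is no in-paper argument to compare yours against. Taken on its own terms, your proof is essentially correct and is the standard modern route: realize $\matV^T\matR^T\matR\matV$ as a sum of independent rank-one PSD summands, verify unbiasedness, bound both $\TNorm{\matX_t}$ and the variance proxy $\TNorm{\sum_t\Expect{\matX_t^2}}$ by $n/r$ using $p_i=\TNormS{\matV_i}/n$, and invoke matrix Bernstein or matrix Chernoff. Two remarks. First, the argument actually underlying \cite{Dasgup07} is not matrix Bernstein but the Rudelson--Vershynin moment bound on $\Expect{\TNorm{\matV^T\matV-\matV^T\matR^T\matR\matV}}$ followed by Markov's inequality; you correctly diagnose that the $\sqrt{\delta}$ inside the logarithm (and the ``probability at least $0.99$'' phrasing) is the fingerprint of that route, so your closing paragraph is the faithful reconstruction of the cited proof, while the Bernstein version you carry out in detail is a legitimate, slightly cleaner alternative with an explicit dimension factor $\ell\le n$ that the stated $r$ easily absorbs. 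Second, one wrinkle you inherit from the paper rather than introduce: with $p_i=\TNormS{\matV_i}/n$ the $p_i$ sum to $\rho/n$, not $1$, so the ``$r$ i.i.d.\ trials'' reading requires normalizing the distribution (which would in fact yield the sharper $r=O(\ell\log\ell/\epsilon^2)$), whereas the truncation $\tilde p_i=\min\{1,rp_i\}$ signals the independent-Bernoulli reading, under which your bounds of $n/r$ on the summand norm and on the variance proxy are exactly right and the deterministically retained high-leverage rows contribute no variance, as you note. Either way the conclusion holds with the stated $r$, so I see no genuine gap.
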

\vskip -0.2cm

\section{Theoretical Analysis}
Though our feature selection algorithms are relatively simple, 
we show that running the linear SVM
in the feature space results in a classifier with provably comparable
margin to the SVM classifier obtained from the full feature space.
Our main results are in Theorems \ref{thm:main_technical} \& \ref{thm:genthm}. We state the theorems for BSS, but similar theorems can be stated for leverage-score sampling.

\subsection{Margin is preserved by Supervised Feature Selection}
Theorem \ref{thm:main_technical} says that you get comparable
margin from solving the
SVM on the support vectors (equivalently all the data) and 
from solving the
SVM on support vectors in a feature space with only 
\math{O(\#\text{support vectors})} features.
\begin{theorem}\label{thm:main_technical}
Given \math{\epsilon\in(0,1)}, run supervised BSS-feature selection on
\math{\matX^{\textbf{sv}}} with \math{r_1=36p/\epsilon^2},
to obtain the feature sampling and rescaling matrix \math{\matR}.
Let \math{\gamma^*} and \math{\tilde\gamma^*} be the margins obtained by
solving the SVM dual \r{eqn:svm1A} with 
\math{(\matX^{\textbf{sv}},\matY^{\textbf{sv}})} and 
\math{(\matX^{\textbf{sv}} \matR,\matY^{\textbf{sv}})}
respectively. Then,
$\tilde\gamma^{*2}\ge \left(1-\epsilon\right) \gamma^{*2}.$
\end{theorem}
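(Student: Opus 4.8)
The plan is to reduce the claim to a subspace-embedding bound and then push it through SVM duality. Put $\matZ=(\matX^{\textbf{sv}})^T\matY^{\textbf{sv}}$, so that \r{eqn:svm1A} on the support vectors maximizes $f(\a)=\bm{1}^T\a-\tfrac12\TNormS{\matZ\a}$ while \r{eqn:svm3} with $\matX=\matX^{\textbf{sv}},\ \matY=\matY^{\textbf{sv}}$ maximizes $\tilde f(\a)=\bm{1}^T\a-\tfrac12\TNormS{\matR^T\matZ\a}$, both over the same feasible set $\mathcal{F}=\{\a:\bm{1}^T\matY^{\textbf{sv}}\a=0,\ \bm{0}\le\a\le\bm{C}\}$. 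Using the background identity $\TNormS{\w^*}=\bm{1}^T\a^*$, the optimal value of \r{eqn:svm1A} equals $\tfrac12\TNormS{\w^*}$, so $1/\gamma^{*2}=\TNormS{\w^*}=2\max_{\a\in\mathcal{F}}f(\a)$, and likewise $1/\tilde\gamma^{*2}=2\max_{\a\in\mathcal{F}}\tilde f(\a)$. Hence it suffices to prove $\max_{\a\in\mathcal{F}}\tilde f(\a)\le(1-\epsilon/2)^{-1}\max_{\a\in\mathcal{F}}f(\a)$, and the only obstruction would be $\TNormS{\matR^T\matZ\a}$ being much smaller than $\TNormS{\matZ\a}$ for some $\a$.

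The embedding is supplied by Lemma~\ref{lem:ssp-app}. Take the SVD $\matX^{\textbf{sv}}=\matU\matSig\matV^T$ with $\matV\in\mathbb{R}^{d\times\rho}$ and $\rho=\rank(\matX^{\textbf{sv}})\le p$, and run single-set spectral sparsification on $\matV$ with $\ell=\rho$ and $r=r_1=36p/\epsilon^2>\rho$; this yields $\matR$ with $\TNorm{\matI_\rho-\matV^T\matR\matR^T\matV}\le 3\sqrt{\rho/r_1}\le 3\sqrt{p/r_1}=\epsilon/2$. Every vector $\matZ\a=\matV\matSig\matU^T\matY^{\textbf{sv}}\a$ lies in $\col(\matV)$, so writing $\matZ\a=\matV\v$ with $\v\in\mathbb{R}^\rho$ gives $\TNormS{\matZ\a}=\TNormS{\v}$ and $\TNormS{\matR^T\matZ\a}=\v^T(\matV^T\matR\matR^T\matV)\v$, whence $\lvert\TNormS{\matR^T\matZ\a}-\TNormS{\matZ\a}\rvert\le(\epsilon/2)\TNormS{\matZ\a}$; in particular $\TNormS{\matR^T\matZ\a}\ge(1-\epsilon/2)\TNormS{\matZ\a}$ for every $\a$ (feasible or not).

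To finish, combine this lower bound with a rescaling of the dual variable. Pointwise, $\tilde f(\a)\le g(\a):=\bm{1}^T\a-\tfrac12(1-\epsilon/2)\TNormS{\matZ\a}$, so $\max_{\mathcal{F}}\tilde f\le\max_{\mathcal{F}}g$. The substitution $\a=(1-\epsilon/2)^{-1}\bet$ gives $g(\a)=(1-\epsilon/2)^{-1}f(\bet)$ (the linear term scales by $(1-\epsilon/2)^{-1}$, the quadratic term by $(1-\epsilon/2)^{-2}$, and $\tfrac12(1-\epsilon/2)(1-\epsilon/2)^{-2}=\tfrac12(1-\epsilon/2)^{-1}$); the constraint $\bm{1}^T\matY^{\textbf{sv}}\a=0$ is unaffected, while $\bm{0}\le\a\le\bm{C}$ becomes $\bm{0}\le\bet\le(1-\epsilon/2)\bm{C}$, a \emph{subset} of the original box. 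Hence $\max_{\a\in\mathcal{F}}g(\a)=(1-\epsilon/2)^{-1}\max_{\bet\in\mathcal{F}'}f(\bet)\le(1-\epsilon/2)^{-1}\max_{\a\in\mathcal{F}}f(\a)$, where $\mathcal{F}'\subseteq\mathcal{F}$ is $\mathcal{F}$ with $\bm{C}$ replaced by $(1-\epsilon/2)\bm{C}$. Putting it together, $1/\tilde\gamma^{*2}\le(1-\epsilon/2)^{-1}/\gamma^{*2}$, i.e.\ $\tilde\gamma^{*2}\ge(1-\epsilon/2)\gamma^{*2}\ge(1-\epsilon)\gamma^{*2}$.

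The real engine is the spectral-sparsification guarantee of Lemmas~\ref{lem:bss}--\ref{lem:ssp-app}, which is cited, so the rest of the argument is light; the points that need care are (i) running BSS on the \emph{orthonormal} right-singular-vector matrix $\matV$ of $\matX^{\textbf{sv}}$ rather than on $\matX^{\textbf{sv}}$ itself, since that is exactly what makes $\TNormS{\matR^T\matZ\a}\approx\TNormS{\matZ\a}$ hold uniformly over all $\a$; and (ii) the duality bookkeeping — the identity relating the optimal dual value to $\TNormS{\w^*}$, and, in the rescaling step, checking that the $\bm{C}$-box shrinks rather than grows so that the needed inequality still points the right way (for the separable/hard-margin case $\bm{C}=\infty$ and the box issue disappears). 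I expect (ii) to be the only place where one might slip on a sign or a constant.
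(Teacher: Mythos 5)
Your proof is correct, and in the one step that actually carries the argument it takes a genuinely different route from the paper's. Both proofs share the same skeleton: apply Lemma~\ref{lem:ssp-app} to the orthonormal right-singular-vector factor $\matV$ of $\matX^{\textbf{sv}}$ (with $\ell=\rho\le p$ and $r_1=36p/\epsilon^2$) to get $\TNorm{\matE}\le 3\sqrt{p/r_1}=\epsilon/2$ for $\matE=\matI-\matV^T\matR\matR^T\matV$, and invoke the KKT identity $\TNormS{\w^*}=\sum_i\alpha_i^*$ (and its sampled analogue) to turn the two dual optima into $\frac12\TNormS{\w^*}$ and $\frac12\TNormS{\tw^*}$. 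They differ in how the two optima are compared. The paper uses a cross-feasibility perturbation argument: it plugs the sampled-problem optimizer $\ta^*$ into the original objective (feasible since the constraints are data-independent), obtains $Z_{opt}\ge\tilde Z_{opt}-\frac12\ta^{*T}\matY^{\textbf{sv}}\matU\matSig\matE\matSig\matU^T\matY^{\textbf{sv}}\ta^*$, and then must convert the error bound $\TNorm{\matE}\,\TNormS{\ta^{*T}\matY^{\textbf{sv}}\matX^{\textbf{sv}}}$ into $\frac{\TNorm{\matE}}{1-\TNorm{\matE}}\TNormS{\ta^{*T}\matY^{\textbf{sv}}\matX^{\textbf{sv}}\matR}$ so the error can be absorbed as a multiple of $\tilde Z_{opt}$, yielding the factor $1-\TNorm{\matE}/(1-\TNorm{\matE})$. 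You never evaluate anything at $\ta^*$: you dominate the sampled objective pointwise by $g(\a)=\bm{1}^T\a-\frac12(1-\epsilon/2)\TNormS{\matZ\a}$ and then rescale the dual variable, verifying that the equality constraint is scale-invariant and that the box constraint shrinks so the rescaled point stays feasible. This buys a marginally better constant ($1-\epsilon/2$ versus $1-\frac{\epsilon/2}{1-\epsilon/2}$, both of which suffice for $1-\epsilon$) and avoids the back-and-forth between $\TNormS{\ta^{*T}\matY^{\textbf{sv}}\matX^{\textbf{sv}}}$ and $\TNormS{\ta^{*T}\matY^{\textbf{sv}}\matX^{\textbf{sv}}\matR}$; the only extra obligation is the feasibility of the rescaled variable, which you handle correctly (including the hard-margin caveat). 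Both arguments rest on the same two external facts — the BSS guarantee and the dual-value identity — so the difference is purely in the comparison mechanism, and both are sound.
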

\begin{proof} 
Let $\matX^{\textbf{tr}} \in \mathbb{R}^{n\times d},$ $\matY^{\textbf{tr}} \in \mathbb{R}^{n \times n}$  be the feature matrix and class labels of the training set (as defined in Section~\ref{subsec:notn}) and let $\dot{\a}^* = \left[\alpha_1^*,\alpha_2^*,\ldots,\alpha_n^*\right]^T \in \mathbb{R}^n$ be the vector achieving the optimal solution for the problem of eqn.~(\ref{eqn:svm1A}). Then,

\begin{equation} \label{eqn:svm_org}
Z_{opt} = \sum_{j=1}^n \dot{\alpha}_j^* - \frac{1}{2} \dot{\a}^{*T} \matY^{\textbf{tr}} \matX^{\textbf{tr}}  \left(\matX^{\textbf{tr}}\right)^T \matY^{\textbf{tr}} \dot{\a}^* 
\end{equation}
Let $p \leq n$ be the support vectors with $\dot{\alpha}_j>0$. Let $\a^* = \left[\alpha_1^*,\alpha_2^*,\ldots,\alpha_p^*\right]^T \in \mathbb{R}^p$ be the vector achieving the optimal solution for the problem of eqn.~(\ref{eqn:svm_org}). Let $\matX^{\textbf{sv}} \in \mathbb{R}^{p\times d}$, $\matY^{\textbf{sv}} \in \mathbb{R}^{p\times p}$ be the support vector matrix and the corresponding labels respectively. Let $\matE=\matV^T\matV - \matV^T \matR\matR^T \matV$. Then, we can write eqn~(\ref{eqn:svm_org}) in terms of support vectors as,
\begin{eqnarray}
Z_{opt} &=& \sum_{i=1}^p \alpha_i^* - \frac{1}{2} \a^{*T} \matY^{\textbf{sv}} \matX^{\textbf{sv}} \left(\matX^{\textbf{sv}}\right)^T \matY^{\textbf{sv}} \a^* \label{eqn:svm_sv} \nonumber \\
 &=& \sum_{i=1}^p \alpha_i^*  - \frac{1}{2} \a^{*T} \matY^{\textbf{sv}} \matU \matSig \matV^T \matV \matSig \matU^T \matY^{\textbf{sv}} \a^* \nonumber \\
\label{eqn: thm_eqn1} &=& \sum_{i=1}^p \alpha_i^* - \frac{1}{2} \a^{*T} \matY^{\textbf{sv}} \matU \matSig \matV^T \matR \matR^T \matV \matSig \matU^T \matY^{\textbf{sv}} \a^* \nonumber \\
&&- \frac{1}{2} \a^{*T} \matY^{\textbf{sv}} \matU \matSig \matE \matSig \matU^T \matY^{\textbf{sv}} \a^* .
\end{eqnarray}
Let $\ta^* = \left[\tilde{\alpha}_1^*,\tilde{\alpha}_2^*,\ldots,\tilde{\alpha}_p^*\right]^T \in\mathbb{R}^p$ be the vector achieving the optimal solution for the dimensionally-reduced SVM problem of eqn.~(\ref{eqn:svm_sv})
using \math{\tilde\matX^{\textbf{sv}}=\matX^{\textbf{sv}}\matR}. Using the SVD of $\matX^{\textbf{sv}}$,
\begin{equation}
\label{eqn:thm_eqn2}
\tilde{Z}_{opt} = \sum_{i=1}^p \tilde{\alpha}_i^* - \frac{1}{2}\ta^{*T} \matY^{\textbf{sv}}\matU \matSig \matV^T \matR \matR^T \matV \matSig \matU^T \matY^{\textbf{sv}} \ta^*.
\end{equation}
Since the constraints on \math{\a^*,\ta^*} do not depend on the data
it is clear that $\ta^*$ is a feasible solution for the problem of eqn.~(\ref{eqn:svm_sv}). Thus, from the optimality of $\a^*$, and using eqn.~(\ref{eqn:thm_eqn2}), it follows that
\begin{eqnarray}
Z_{opt} &=& \sum_{i=1}^p \alpha_i^*  - \frac{1}{2} \a^{*T} \matY^{\textbf{sv}} \matU \matSig \matV^T \matR \matR^T \matV \matSig \matU^T \matY^{\textbf{sv}} \a^* \nonumber \\
&&- \frac{1}{2} \a^{*T} \matY^{\textbf{sv}} \matU \matSig \matE \matSig \matU^T \matY^{\textbf{sv}} \a^* \nonumber \\
&\geq& \sum_{i=1}^p \tilde{\alpha}_i^*  - \frac{1}{2} \ta^{*T} \matY^{\textbf{sv}} \matU \matSig \matV^T \matR \matR^T \matV \matSig \matU^T \matY^{\textbf{sv}} \ta^* \nonumber \\ 
&&- \frac{1}{2} \ta^{*T} \matY^{\textbf{sv}} \matU \matSig \matE \matSig \matU^T \matY^{\textbf{sv}} \ta^* \nonumber \\
\label{eqn: thm_eqn3} &=& \tilde{Z}_{opt} - \frac{1}{2} \ta^{*T} \matY^{\textbf{sv}} \matU \matSig \matE \matSig \matU^T \matY^{\textbf{sv}} \ta^*.
\end{eqnarray}
We now analyze the second term using standard submultiplicativity properties and $\matV^T\matV=\matI$. Taking $\matQ = \ta^{*T}\matY^{\textbf{sv}}\matU\matSig$,
\begin{eqnarray}
\nonumber && \frac{1}{2}\ta^{*T}\matY^{\textbf{sv}}\matU\matSig\matE\matSig\matU^T\matY^{\textbf{sv}}\ta^*  \\
&\leq& \frac{1}{2}\TNorm{\matQ} \TNorm{\matE} \TNorm{\matQ^T} \nonumber \\
\nonumber &=& \frac{1}{2} \TNorm{\matE} \TNormS{\matQ} \\
\nonumber &=& \frac{1}{2} \TNorm{\matE} \TNormS{\ta^{*T}\matY^{\textbf{sv}} \matU\matSig \matV^T} \\
\label{eqn:thm_eqn4} &=& \frac{1}{2} \TNorm{\matE} \TNormS{\ta^{*T}\matY^{\textbf{sv}}\matX^{\textbf{sv}}}.
\end{eqnarray}
%
Combining eqns.~\eqref{eqn: thm_eqn3} and~\eqref{eqn:thm_eqn4}, we get
\begin{eqnarray}
\label{eqn:thm_eqn5}
Z_{opt} &\geq & \tilde{Z}_{opt} - \frac{1}{2} \TNorm{\matE} \TNormS{\ta^{*T}\matY^{\textbf{sv}}\matX^{\textbf{sv}}}.
\end{eqnarray}
We now proceed to bound the second term in the right-hand side of the above equation. Towards that end, we bound the difference:
%
\begin{eqnarray*}
%
&& \abs{\ta^{*T} \matY^{\textbf{sv}}\matU\matSig\left(\matV^T\matR\matR^T\matV-\matV^T\matV\right)\matSig\matU^T\matY^{\textbf{sv}}\ta^*} \nonumber \\
&=& \abs{\ta^{*T} \matY^{\textbf{sv}}\matU\matSig\left(-\matE\right)\matSig\matU^T\matY^{\textbf{sv}}\ta^*} \nonumber \\
&\leq& \TNorm{\matE}\TNormS{\ta^{*T} \matY^{\textbf{sv}}\matU\matSig} \nonumber \\
&=& \TNorm{\matE}\TNormS{\ta^{*T} \matY^{\textbf{sv}}\matU\matSig\matV^T} \nonumber \\
&=& \TNorm{\matE}\TNormS{\ta^{*T} \matY^{\textbf{sv}}\matX^{\textbf{sv}}}. \nonumber
\end{eqnarray*}
We can rewrite the above inequality as 
\begin{eqnarray}
&& \abs{\TNormS{\ta^{*T}\matY^{\textbf{sv}}\matX^{\textbf{sv}}\matR}- \TNormS{\ta^{*T}\matY^{\textbf{sv}}\matX^{\textbf{sv}}}} \nonumber \\
&\leq& \TNorm{\matE}\TNormS{\ta^{*T} \matY^{\textbf{sv}}\matX^{\textbf{sv}}} \nonumber \\
\nonumber &\leq& \frac{\TNorm{\matE}}{1-\TNorm{\matE}} \TNormS{\ta^{*T} \matY^{\textbf{sv}}\matX^{\textbf{sv}} \matR}.\nonumber
\end{eqnarray}
Combining with eqn.~\eqref{eqn:thm_eqn5}, we get
\begin{equation}\label{eqn:thm_eqn9}
Z_{opt} \geq  \tilde{Z}_{opt}-\frac{1}{2}\left(\frac{\TNorm{\matE}}{1-\TNorm{\matE}}\right)\TNormS{\ta^{*T} \matY^{\textbf{sv}}\matX^{\textbf{sv}}\matR}.
\end{equation}
Now recall that $\w^{*T} = \a^{*T}\matY^{\textbf{sv}}\matX^{\textbf{sv}}$, $\tw^{*T} = \ta^{*T}\matY^{\textbf{sv}}\matX^{\textbf{sv}}\matR$, $\TNormS{\w^*} = \sum_{i=1}^p \alpha_i^*$, and $\TNormS{\tw^*} = \sum_{i=1}^p \tilde\alpha_i^*$. Then, the optimal solutions $Z_{opt}$ and $\tilde{Z}_{opt}$ can be expressed as follows:
\begin{equation}\label{eqn:thm_eqn10}
Z_{opt} = \TNormS{\w^*} -\frac{1}{2}\TNormS{\w^*} = \frac{1}{2}\TNormS{\w^*}. 
\end{equation}\vskip -0.5cm
\begin{equation}\label{eqn:thm_eqn11}
\tilde{Z}_{opt} = \TNormS{\tw^*} -\frac{1}{2}\TNormS{\tw^*} = \frac{1}{2}\TNormS{\tw^*}.
\end{equation}
Combining eqns.~(\ref{eqn:thm_eqn9}), ~(\ref{eqn:thm_eqn10}) and ~(\ref{eqn:thm_eqn11}), we get
$\TNormS{\w^*} \geq \TNormS{\tw^*} - \left( \frac{\TNorm{\matE}}{1 - \TNorm{\matE}} \right)\TNormS{\tw^*}= \left(1-\frac{\TNorm{\matE}}{1 - \TNorm{\matE}} \right)\TNormS{\tw^*}.$
Let $\gamma^*=\norm{\w^*}_2^{-1}$ be the geometric margin of the problem of eqn.~(\ref{eqn:svm_sv}) and let $\tilde{\gamma}^*=\norm{\tw^*}_2^{-1}$ be the geometric margin of the problem of eqn.~(\ref{eqn:thm_eqn2}). Then, the above equation implies:
$\gamma^{*2} \leq \left(1-\frac{\TNorm{\matE}}{1 - \TNorm{\matE}} \right)^{-1}\tilde{\gamma}^{*2}$ 
$\Rightarrow \tilde{\gamma}^{*2} \geq  \left(1-\frac{\TNorm{\matE}}{1 - \TNorm{\matE}} \right)\gamma^{*2}.$
The result follows because \math{\norm{\matE}_2\le\epsilon/2} by our choice
of \math{r}, and so \math{\norm{\matE}_2/(1-\norm{\matE}_2)\le\epsilon}.
\end{proof}
\noindent
We now state a similar theorem for leverage-score sampling.

\begin{theorem}\label{thm:main_technical3}
Given \math{\epsilon\in(0,1)}, run supervised Leverage-score sampling based feature selection on
\math{\matX^{\textbf{sv}}} with \math{r_1=\tilde{O}(p/\epsilon^2)},
to obtain the feature sampling and rescaling matrix \math{\matR}.
Let \math{\gamma^*} and \math{\tilde\gamma^*} be the margins obtained by
solving the SVM dual \r{eqn:svm1A} with 
\math{(\matX^{\textbf{sv}},\matY^{\textbf{sv}})} and 
\math{(\matX^{\textbf{sv}} \matR,\matY^{\textbf{sv}})}
respectively. Then with probability at least 0.99,
$\tilde\gamma^{*2}\ge \left(1-\epsilon\right) \gamma^{*2}.$
\end{theorem}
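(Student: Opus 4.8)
The plan is to re-run the proof of Theorem~\ref{thm:main_technical} essentially verbatim, observing that the only place the BSS construction entered was through the single scalar inequality \math{\TNorm{\matE}\le\epsilon/2}, where \math{\matE=\matV^T\matV-\matV^T\matR\matR^T\matV}. Every step of that proof --- the decomposition of \math{Z_{opt}} in~\eqref{eqn: thm_eqn1}, feasibility of \math{\ta^*} for the sampled problem, the submultiplicativity bound~\eqref{eqn:thm_eqn4}, the rewriting in terms of \math{\TNormS{\ta^{*T}\matY^{\textbf{sv}}\matX^{\textbf{sv}}\matR}}, and the identities~\eqref{eqn:thm_eqn10} and~\eqref{eqn:thm_eqn11} --- holds for an \emph{arbitrary} sampling-and-rescaling matrix \math{\matR\in\mathbb{R}^{d\times r_1}}; the value \math{r_1=36p/\epsilon^2} was used only to make Lemma~\ref{lem:ssp-app} yield \math{\TNorm{\matE}\le 3\sqrt{\rho/r_1}\le\epsilon/2}. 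So the entire task is to reproduce the bound \math{\TNorm{\matE}\le\epsilon/2} using leverage-score sampling instead.

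To that end, I would take \math{\matV\in\mathbb{R}^{d\times\rho}} to be the top-\math{\rho} right singular vectors of \math{\matX^{\textbf{sv}}}, where \math{\rho=\rank(\matX^{\textbf{sv}})\le p}, so that \math{\matV^T\matV=\matI_\rho} and the hypothesis of Lemma~\ref{lem:lvgscr} holds with \math{\ell=\rho}. Applying Lemma~\ref{lem:lvgscr} with accuracy parameter \math{\epsilon/2} (in place of \math{\epsilon}) and a small constant failure probability \math{\delta} gives, after sampling and rescaling \math{r_1=O\!\left((\rho/\epsilon^2)\log(\rho/(\epsilon^2\sqrt\delta))\right)} columns, a matrix \math{\matR} with \math{\TNorm{\matE}\le\epsilon/2} with probability at least \math{0.99}. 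Since \math{\rho\le p}, this is \math{r_1=\tilde{O}(p/\epsilon^2)}, matching the theorem statement. (The transpose in Lemma~\ref{lem:lvgscr} is only a convention: with \math{\matR\in\mathbb{R}^{d\times r_1}}, the \math{\rho\times\rho} matrix appearing in \math{\matE} is \math{\matV^T\matR\matR^T\matV}.)

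Conditioning on that \math{0.99}-probability event, I would then run the chain of inequalities from the proof of Theorem~\ref{thm:main_technical} to reach \math{\TNormS{\w^*}\ge\bigl(1-\TNorm{\matE}/(1-\TNorm{\matE})\bigr)\TNormS{\tw^*}}, and hence, dividing and inverting exactly as there, \math{\tilde\gamma^{*2}\ge\bigl(1-\TNorm{\matE}/(1-\TNorm{\matE})\bigr)\gamma^{*2}}. Finally \math{\TNorm{\matE}\le\epsilon/2} implies \math{\TNorm{\matE}/(1-\TNorm{\matE})\le\epsilon}, so \math{\tilde\gamma^{*2}\ge(1-\epsilon)\gamma^{*2}} on that event, which is the claimed conclusion.

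I do not expect a genuine obstacle here: the argument is a drop-in replacement of Lemma~\ref{lem:ssp-app} by Lemma~\ref{lem:lvgscr}. The only care needed is bookkeeping --- feeding \math{\epsilon/2} rather than \math{\epsilon} into Lemma~\ref{lem:lvgscr} so that the error-amplification step \math{\TNorm{\matE}/(1-\TNorm{\matE})\le\epsilon} survives, and noting that the relevant dimension in the leverage-score guarantee is the rank \math{\rho\le p} of \math{\matX^{\textbf{sv}}}, which is what keeps the feature count at \math{\tilde{O}(p/\epsilon^2)}. Since the randomness is a single event, nothing needs to be union-bounded.
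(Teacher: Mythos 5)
Your proposal is correct and is exactly the argument the paper intends: the paper gives no separate proof of Theorem~\ref{thm:main_technical3}, presenting it as "a similar theorem" whose proof is the proof of Theorem~\ref{thm:main_technical} with Lemma~\ref{lem:ssp-app} swapped for Lemma~\ref{lem:lvgscr}, which is precisely your drop-in replacement yielding $\TNorm{\matE}\le\epsilon/2$ with probability at least $0.99$. Your bookkeeping (feeding $\epsilon/2$ into the lemma, using $\rho\le p$, and noting the transpose convention) is sound and matches the paper's use of $\TNorm{\matE}\le\epsilon/2$ in the deterministic case.
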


\subsection{Geometry is preserved by Unsupervised Feature Selection}
In the unsupervised setting, the next theorem
says that with a number of features proportional to the 
rank of the training data (which is at most the number of data points), we
 preserve \math{B^2/\gamma^2}, 
thus ensuring comparable generalization error bounds
(\math{B} is the radius of the minimum enclosing ball).
\begin{theorem}
\label{thm:genthm}
Given \math{\epsilon\in(0,1)}, run unsupervised BSS-feature selection on
the full data
\math{\matX^{\textbf{tr}}} with \math{r_2=O\left(\rho/\epsilon^2 \right)},
where \math{\rho=\rank(\matX^{\textbf{tr}})},
to obtain the feature sampling and rescaling matrix
\math{\matR}.
Let \math{\gamma^*} and \math{\tilde\gamma^*} be the margins obtained by
solving the SVM dual \r{eqn:svm1A} with 
\math{(\matX^{\textbf{tr}},\matY^{\textbf{tr}})} and 
\math{(\matX^{\textbf{tr}} \matR,\matY^{\textbf{tr}})}
respectively; and, 
let $B$ and $\tilde{B}$ be the radii for the data matrices
 $\matX^{\textbf{tr}}$ and  $\matX^{\textbf{tr}}\matR$ respectively. Then,
$$\frac{\tilde{B}^2}{\tilde{\gamma}^{*2}} \leq \frac{\left(1+\epsilon\right)}{\left(1-\epsilon\right)}\frac{B^2}{\gamma^{*2}}
=
(1+O(\epsilon))\frac{B^2}{\gamma^{*2}}.$$	
\end{theorem}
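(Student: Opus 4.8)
The plan is to combine two separate estimates: one showing that $\tilde\gamma^{*2} \ge (1-\epsilon)\gamma^{*2}$ for the unsupervised setting, and one showing that $\tilde B^2 \le (1+\epsilon)B^2$, and then simply divide. The margin bound is essentially identical to the proof of Theorem~\ref{thm:main_technical}, but now applied to the full data matrix $\matX^{\textbf{tr}}$ instead of the support vector matrix: we take $\matV$ to be the top-$\rho$ right singular vectors of $\matX^{\textbf{tr}}$, apply Lemma~\ref{lem:ssp-app} with $\ell = \rho$ and $r_2 = O(\rho/\epsilon^2)$ chosen so that $\TNorm{\matE} \le \epsilon/2$ where $\matE = \matV^T\matV - \matV^T\matR\matR^T\matV$, and run through the same chain of inequalities comparing $Z_{opt}$ (solving the SVM on $\matX^{\textbf{tr}}$) with $\tilde Z_{opt}$ (solving it on $\matX^{\textbf{tr}}\matR$). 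The feasibility argument, the submultiplicativity bound $\frac12\ta^{*T}\matY^{\textbf{tr}}\matU\matSig\matE\matSig\matU^T\matY^{\textbf{tr}}\ta^* \le \frac12\TNorm{\matE}\TNormS{\ta^{*T}\matY^{\textbf{tr}}\matX^{\textbf{tr}}}$, and the conversion $Z_{opt} = \frac12\TNormS{\w^*}$, $\tilde Z_{opt} = \frac12\TNormS{\tw^*}$ all go through verbatim, yielding $\tilde\gamma^{*2} \ge (1 - \TNorm{\matE}/(1-\TNorm{\matE}))\gamma^{*2} \ge (1-\epsilon)\gamma^{*2}$.

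For the radius bound, I would first recall that the minimum enclosing ball radius $B = \min_{\x^*}\max_i \TNorm{\x_i - \x^*}$ can be written (by the standard dual/convexity argument for the MEB) as a quantity governed by $\max_i \x_i^T\x_i$ and pairwise inner products $\x_i^T\x_j$ of the (centered) data, all of which are entries of the Gram matrix $\matX^{\textbf{tr}}(\matX^{\textbf{tr}})^T$. Writing $\matX^{\textbf{tr}} = \matU\matSig\matV^T$, after feature selection the Gram matrix becomes $\matU\matSig\matV^T\matR\matR^T\matV\matSig\matU^T = \matU\matSig(\matV^T\matV - \matE)\matSig\matU^T = \matX^{\textbf{tr}}(\matX^{\textbf{tr}})^T - \matU\matSig\matE\matSig\matU^T$, so every inner product is perturbed by at most $\TNorm{\matU\matSig\matE\matSig\matU^T} \le \TNorm{\matE}\TNorm{\matSig}^2 = \TNorm{\matE}\sigma_1^2$ in the relevant quadratic forms. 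More cleanly: for any unit combination appearing in the MEB optimization, the squared distance $\TNormS{(\x_i - \x^*)\matR}$ differs from $\TNormS{\x_i - \x^*}$ by a factor lying in $[1-\TNorm{\matE}, 1+\TNorm{\matE}]$, since both $\x_i$ and the optimal center $\x^*$ lie in the row space of $\matX^{\textbf{tr}}$, hence in the column space of $\matV\matSig\matU^T$, so the relevant vectors are of the form $\matV\z$ and Lemma~\ref{lem:bss} applies directly to control $\TNormS{\matV^T\matR\z}$ versus $\TNormS{\matV\z}$. Taking the optimal center for the full-data MEB as a feasible center for the reduced-data MEB then gives $\tilde B^2 \le (1+\TNorm{\matE})B^2 \le (1+\epsilon)B^2$.

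Finally, combining the two: $\tilde B^2/\tilde\gamma^{*2} \le (1+\epsilon)B^2 / ((1-\epsilon)\gamma^{*2}) = \frac{1+\epsilon}{1-\epsilon}\cdot B^2/\gamma^{*2} = (1 + O(\epsilon))B^2/\gamma^{*2}$, where the last step uses $\frac{1+\epsilon}{1-\epsilon} = 1 + \frac{2\epsilon}{1-\epsilon} \le 1 + 4\epsilon$ for $\epsilon \le 1/2$.

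The main obstacle I anticipate is the radius argument, specifically making precise the claim that the optimal MEB center $\x^*$ for the full data can be taken to lie in the row space of $\matX^{\textbf{tr}}$ (equivalently in $\spann{\x_1,\ldots,\x_n}$). This is true — projecting any candidate center onto the affine span of the data points cannot increase the maximum distance — but it needs a clean one-line justification so that the substitution $\x_i - \x^* = \matV\z_i$ is valid and Lemma~\ref{lem:bss} can be invoked on $\z_i$. Once that reduction is in hand, the rest is just bookkeeping with the $(1\pm\sqrt{\rho/r_2})^2$ distortion factors from Lemma~\ref{lem:bss}, which are absorbed into $1\pm\epsilon$ by the choice $r_2 = O(\rho/\epsilon^2)$, exactly as in the margin proof.
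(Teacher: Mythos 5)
Your proposal is correct and follows the same two-part skeleton as the paper: (i) rerun the margin argument of Theorem~\ref{thm:main_technical} verbatim on $(\matX^{\textbf{tr}},\matY^{\textbf{tr}})$ to get $\tilde\gamma^{*2}\ge(1-\epsilon)\gamma^{*2}$, (ii) show $\tilde B^2\le(1+\epsilon)B^2$, (iii) divide and absorb $\frac{1+\epsilon}{1-\epsilon}$ into $1+O(\epsilon)$. The only place you genuinely diverge from the paper is step (ii). You argue that the optimal MEB center $\x^*$ may be taken in the linear span of the data (projection onto the affine hull cannot increase any distance, and the MEB center is in fact a convex combination of the boundary points), so that $\x_i-\x^*=\matV\z_i$ and Lemma~\ref{lem:bss} applies directly with the same $\matV$ used in the margin half; the paper instead forms an augmented matrix $\matX_B\in\mathbb{R}^{(n+1)\times d}$ whose last row is the center, takes its SVD $\matX_B=\matU_B\matSig_B\matV_B^T$ with $\rank(\matX_B)=\rho_B\le\rho+1$, writes $\x_i-\x_B=\left(\e_i-\e_{n+1}\right)^T\matX_B$, and controls the perturbation via $\matE_B=\matV_B^T\matV_B-\matV_B^T\matR\matR^T\matV_B$. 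The paper's device avoids having to justify that the center lies in the span, at the cost of introducing a second orthonormal basis $\matV_B$ on which the BSS guarantee must hold (which is why the rank bound $\rho_B\le\rho+1$ appears, and why the implementation section remarks on whether the center is included as an input); your version needs the one-line span argument you already identified, but then works entirely with the single matrix $\matV$ on which BSS was actually run, which is arguably tidier. Your first attempt at the radius bound via Gram-matrix entries and $\TNorm{\matE}\sigma_1^2$ would only give an additive (not relative) error and should be dropped in favor of the ``more cleanly'' argument you give immediately after, which is the correct one.
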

\begin{proof}(sketch)
The proof has two parts. First, as in  Theorem~\ref{thm:main_technical}
we prove that $\tilde\gamma^{*2}\ge \left(1-\epsilon\right)\cdot \gamma^{*2}.$
This proof is almost identical to the proof of Theorem~\ref{thm:main_technical}
(replacing \math{(\matX^{\textbf{sv}},\matY^{\textbf{sv}})} with
\math{(\matX^{\textbf{tr}},\matY^{\textbf{tr}})}), and so we omit it.
Second, we prove that
\math{\tilde B^2\le(1+\epsilon)B^2}. We give the result (with proof)
as Theorem
\ref{thm:second_result-app} .
The theorem follows by combining these two results.
\end{proof}

\noindent
We now state a similar theorem for leverage-score sampling.

\begin{theorem}
\label{thm:genthm2}
Given \math{\epsilon\in(0,1)}, run unsupervised Leverage-score feature selection on
the full data
\math{\matX^{\textbf{tr}}} with \math{r_2=\tilde{O}\left(\rho/\epsilon^2 \right)},
where \math{\rho=\rank(\matX^{\textbf{tr}})},
to obtain the feature sampling and rescaling matrix
\math{\matR}.
Let \math{\gamma^*} and \math{\tilde\gamma^*} be the margins obtained by
solving the SVM dual \r{eqn:svm1A} with 
\math{(\matX^{\textbf{tr}},\matY^{\textbf{tr}})} and 
\math{(\matX^{\textbf{tr}} \matR,\matY^{\textbf{tr}})}
respectively; and, 
let $B$ and $\tilde{B}$ be the radii for the data matrices
 $\matX^{\textbf{tr}}$ and  $\matX^{\textbf{tr}}\matR$ respectively. Then with probability at least 0.99,
$$\frac{\tilde{B}^2}{\tilde{\gamma}^{*2}} \leq \frac{\left(1+\epsilon\right)}{\left(1-\epsilon\right)}\frac{B^2}{\gamma^{*2}}
=
(1+O(\epsilon))\frac{B^2}{\gamma^{*2}}.$$	
\end{theorem}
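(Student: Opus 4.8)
The plan is to reuse the proof of Theorem~\ref{thm:genthm} essentially verbatim, substituting the randomized guarantee of Lemma~\ref{lem:lvgscr} for the deterministic guarantee of Lemma~\ref{lem:ssp-app} at the single place where a spectral approximation bound is invoked. The key observation is that both halves of Theorem~\ref{thm:genthm} --- the margin bound $\tilde\gamma^{*2}\ge(1-\epsilon)\gamma^{*2}$, inherited from the argument of Theorem~\ref{thm:main_technical}, and the radius bound $\tilde B^2\le(1+\epsilon)B^2$ of Theorem~\ref{thm:second_result-app} --- depend on the feature sampling and rescaling matrix $\matR$ only through the single scalar $\TNorm{\matE}$, where $\matE=\matV^T\matV-\matV^T\matR\matR^T\matV$ and $\matV\in\mathbb{R}^{d\times\rho}$ collects the top-$\rho$ right singular vectors of $\matX^{\textbf{tr}}$ (so $\matV^T\matV=\matI_\rho$ and $\rho=\rank(\matX^{\textbf{tr}})$). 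For the margin this is explicit in the proof of Theorem~\ref{thm:main_technical}; for the radius it is because $\matR\matR^T$ restricted to the row space of $\matX^{\textbf{tr}}$ is a $(1\pm\TNorm{\matE})$-isometry. Hence it suffices to show that leverage-score sampling with $r_2=\tilde{O}(\rho/\epsilon^2)$ rescaled columns produces, with probability at least $0.99$, a matrix $\matR$ with $\TNorm{\matE}$ small.

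First I would instantiate Lemma~\ref{lem:lvgscr} on the orthonormal matrix $\matV$ with accuracy parameter $\epsilon'=\epsilon/2$ (clipped to $\min\{\epsilon/2,1/2\}$, which only affects hidden constants, since that lemma is stated for accuracy in $(0,1/2]$) and with the internal failure probability $\delta$ chosen small enough that the conclusion holds with probability at least $0.99$. This yields sampling and rescaling matrices using $r_2=\tilde{O}(\rho/\epsilon^2)$ columns such that $\TNorm{\matE}\le\epsilon/2$ with probability at least $0.99$. Only one invocation of the lemma is needed, so no union bound is required: the same $\matR$ drives both the margin and the radius estimate.

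Conditioning on the event $\TNorm{\matE}\le\epsilon/2$, I would then run the two deterministic arguments already in the paper. For the margin, the inequality chain of Theorem~\ref{thm:main_technical} --- with $(\matX^{\textbf{sv}},\matY^{\textbf{sv}})$ replaced throughout by $(\matX^{\textbf{tr}},\matY^{\textbf{tr}})$, exactly as in the sketch of Theorem~\ref{thm:genthm} --- gives $\tilde\gamma^{*2}\ge\bigl(1-\frac{\TNorm{\matE}}{1-\TNorm{\matE}}\bigr)\gamma^{*2}\ge(1-\epsilon)\gamma^{*2}$, using $\frac{\epsilon/2}{1-\epsilon/2}\le\epsilon$ for $\epsilon\in(0,1)$. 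For the radius, Theorem~\ref{thm:second_result-app} applied to the same $\matR$ gives $\tilde B^2\le(1+\epsilon)B^2$ from the identical spectral bound. Multiplying $1/\tilde\gamma^{*2}\le 1/\bigl((1-\epsilon)\gamma^{*2}\bigr)$ by $\tilde B^2\le(1+\epsilon)B^2$ gives $\tilde B^2/\tilde\gamma^{*2}\le\frac{1+\epsilon}{1-\epsilon}\cdot B^2/\gamma^{*2}=(1+O(\epsilon))B^2/\gamma^{*2}$, which is the claim.

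Because all the analytic work is already in place, I do not expect a genuine obstacle --- the only real care is in the bookkeeping. The fussiest point is the sample complexity: Lemma~\ref{lem:lvgscr} is written with $n$ in the bound on $r$, whereas the natural scaling for $\TNorm{\matV^T\matV-\matV^T\matR\matR^T\matV}$ is with the number of columns of $\matV$, here $\rho=\rank(\matX^{\textbf{tr}})$ (this is the standard leverage-score analysis), so I would either re-derive the bound with $\rho$ in place of $n$ or absorb the discrepancy into the $\tilde{O}$; either way one obtains $r_2=\tilde{O}(\rho/\epsilon^2)$. A secondary check is that clipping $\epsilon'$ at $1/2$ and selecting $\delta$ to make the success probability at least $0.99$ inflate $r_2$ only by constant and logarithmic factors, which are already absorbed by the $\tilde{O}$.
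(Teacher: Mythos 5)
Your proposal is correct and is exactly the argument the paper intends: the paper gives no separate proof for this theorem, presenting it as the direct analogue of Theorem~\ref{thm:genthm} obtained by swapping the deterministic spectral bound of Lemma~\ref{lem:ssp-app} for the probabilistic one of Lemma~\ref{lem:lvgscr}, which is precisely what you do. Your observation that both the margin and radius arguments depend on $\matR$ only through $\TNorm{\matE}$, and your note that the sample-size bound in Lemma~\ref{lem:lvgscr} should scale with $\rho$ rather than $n$, are accurate and, if anything, more careful than the paper's own presentation.
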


\subsection{Proof That the Data Radius is preserved by 
Unsupervised BSS-Feature Selection.}
 
\begin{theorem}\label{thm:second_result-app}
Let $r_2=O\left(n/\epsilon^2\right)$, where $\epsilon>0$ is an accuracy parameter, $n$ is the number of training points and $r_2$ is the number of features selected. Let $B$ be the radius of the minimum ball enclosing all points in the full-dimensional space, and let $\tilde{B}$ be the radius of the ball enclosing all points in the sampled subspace obtained by using BSS in an unsupervised manner. For $\matR$ as in Lemma~\ref{lem:ssp-app},
$\tilde{B}^2 \leq (1+\epsilon) B^2.$
\end{theorem}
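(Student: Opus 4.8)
The plan is to obtain a feasible (not necessarily optimal) center for the minimum enclosing ball (MEB) of the dimension-reduced points by pushing the optimal full-dimensional center through $\matR$, and then to control the resulting squared radii with the spectral guarantee of Lemma~\ref{lem:ssp-app}. Concretely, let $\x^*$ be a center attaining the MEB radius of $\matX^{\textbf{tr}}$, so that $B^2=\max_i\TNormS{\x_i-\x^*}$, and let $\matX^{\textbf{tr}}=\matU\matSig\matV^T$ with $\matV\in\mathbb{R}^{d\times\rho}$, $\rho=\rank(\matX^{\textbf{tr}})$. Since the MEB center lies in the convex hull of the data (a center outside the hull can be replaced by its projection onto the hull without increasing any $\TNorm{\x_i-\x^*}$), and the hull is contained in the span of the $\x_i$, which equals $\col(\matV)$, we may write $\x_i-\x^*=\matV\y_i$ for suitable $\y_i\in\mathbb{R}^\rho$. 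The reduced points are the rows $\matR^T\x_i$ of $\matX^{\textbf{tr}}\matR$, so choosing the center $\matR^T\x^*$ gives immediately
$$\tilde B^2\le\max_i\TNormS{\matR^T\x_i-\matR^T\x^*}=\max_i\TNormS{\matR^T(\x_i-\x^*)}.$$

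Next I would bound each term on the right. Using $\x_i-\x^*=\matV\y_i$ and $\matV^T\matV=\matI_\rho$ (so that $\TNormS{\x_i-\x^*}=\TNormS{\y_i}$), we get
$$\TNormS{\matR^T(\x_i-\x^*)}-\TNormS{\x_i-\x^*}=\y_i^T\left(\matV^T\matR\matR^T\matV-\matV^T\matV\right)\y_i\le\TNorm{\matV^T\matV-\matV^T\matR\matR^T\matV}\cdot\TNormS{\y_i}.$$
Running BSS on $\matV$ with $r_2$ selected features, Lemma~\ref{lem:ssp-app} gives $\TNorm{\matV^T\matV-\matV^T\matR\matR^T\matV}\le 3\sqrt{\rho/r_2}$, hence $\TNormS{\matR^T(\x_i-\x^*)}\le(1+3\sqrt{\rho/r_2})\,\TNormS{\x_i-\x^*}$ for every $i$.

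Finally, since $\rho\le n$, taking $r_2=9\rho/\epsilon^2=O(n/\epsilon^2)$ forces $3\sqrt{\rho/r_2}\le\epsilon$, and combining the two displays yields $\tilde B^2\le(1+\epsilon)\max_i\TNormS{\x_i-\x^*}=(1+\epsilon)B^2$, which is the claim. The only step requiring genuine care is the first one: one must check that the optimal full-dimensional center can be taken inside $\col(\matV)$, so that the BSS bound (which only controls vectors in the column span of $\matV$) applies to every difference $\x_i-\x^*$, and that $\matR^T\x^*$ is an admissible center for the reduced point set. Everything after that is a one-line consequence of the spectral bound, and note that only the upper-bound half of Lemma~\ref{lem:ssp-app} is used here.
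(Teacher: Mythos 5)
Your proof is correct, and its core is the same as the paper's: exhibit $\matR^T\x^*$ as a feasible center for the reduced points and control every squared distance $\TNormS{\matR^T(\x_i-\x^*)}$ via the spectral bound $\TNorm{\matV^T\matV-\matV^T\matR\matR^T\matV}\le 3\sqrt{\rho/r_2}$ of Lemma~\ref{lem:ssp-app}. Where you genuinely diverge is in how the center is brought within reach of that lemma. The paper augments the data matrix with the center as an $(n+1)$-th row, forming $\matX_B=\matU_B\matSig_B\matV_B^T$, writes $\x_i-\x_B=\matX_B^T(\e_i-\e_{n+1})$, and applies Lemma~\ref{lem:ssp-app} to $\matV_B$ (using $\rho_B\le\rho+1$); this means the theoretical guarantee is stated for a BSS run whose input includes the center, which the paper itself notes is expensive to compute and omits in its implementation. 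You instead observe that the MEB center lies in the convex hull of the data (your projection argument for this is sound: projecting an exterior center onto the hull does not increase any distance to a point of the hull), hence in $\col(\matV)$, so every difference $\x_i-\x^*$ is of the form $\matV\y_i$ and the BSS guarantee on $\matV$ alone suffices. This buys two things: the same single BSS run on the right singular vectors of $\matX^{\textbf{tr}}$ now yields both the margin and the radius bounds of Theorem~\ref{thm:genthm} without any augmentation, and the analysis matches what the implementation actually does. You are also right that only the upper half of the two-sided bound is needed. One cosmetic note: the theorem is stated with $r_2=O(n/\epsilon^2)$, and your $r_2=9\rho/\epsilon^2$ with $\rho\le n$ is consistent with that and in fact slightly sharper.
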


\begin{proof}
We consider the matrix $\matX_B \in \mathbb{R}^{(n+1)\times d}$ whose first $n$ rows are the rows of $\matX^{\textbf{tr}}$ and whose last row is the vector $\x_B^T$; here $\x_B$ denotes the center of the minimum radius ball enclosing all $n$ points. Then, the SVD of $\matX_B$ is equal to $\matX_B = \matU_B \matSig_B \matV_B^T$, where $\matU_B \in \mathbb{R}^{(n+1) \times \rho_B}$, $\matSig_B \in \mathbb{R}^{\rho_B \times \rho_B}$, and $\matV \in \mathbb{R}^{d \times \rho_B}$. Here $\rho_B$ is the rank of the matrix $\matX_B$ and clearly $\rho_B \leq \rho + 1$. (Recall that $\rho$ is the rank of the matrix $\matX^{\textbf{tr}}$.) Let $B$ be the radius of the minimal radius ball enclosing all $n$ points in the original space. Then, for any $i=1,\ldots,n$,
\vskip -0.6cm
\begin{equation}\label{eqn:pd4}
B^2 \geq \TNormS{\x_i-\x_B} = \TNormS{\left(\e_i-\e_{n+1}\right)^T\matX_B}.
\end{equation}
%
Now consider the matrix $\matX_B\matR$ and notice that
\begin{eqnarray*}
 &&\abs{\TNormS{\left(\e_i-\e_{n+1}\right)^T\matX_B} - \TNormS{\left(\e_i-\e_{n+1}\right)^T\matX_B\matR}}\\
 &=& \abs{\left(\e_i-\e_{n+1}\right)^T\left(\matX_B\matX_B^T-\matX_B\matR\matR^T\matX_B^T\right)\left(\e_i-\e_{n+1}\right)}\\
%
%
&=& \abs{\left(\e_i-\e_{n+1}\right)^T\matU_B\matSig_B\matE_B\matSig_B\matU_B^T\left(\e_i-\e_{n+1}\right)}\\
&\leq& \TNorm{\matE_B}\TNormS{\left(\e_i-\e_{n+1}\right)^T\matU_B\matSig_B}\\
%
%
&=& \TNorm{\matE_B}\TNormS{\left(\e_i-\e_{n+1}\right)^T\matX_B}.
\end{eqnarray*}
In the above, we let $\matE_B \in \mathbb{R}^{\rho_B \times \rho_B}$ be the matrix that satisfies $\matV_B^T\matV_B = \matV_B^T \matR\matR^T \matV_B + \matE_B$, and we also used $\matV_B^T\matV_B = \matI$. Now consider the ball whose center is the $(n+1)$-th row of the matrix $\matX_B\matR$ (essentially, the center of the minimal radius enclosing ball for the original points in the sampled space). Let $\tilde{i} = \arg \max_{i=1\ldots n} \TNormS{\left(\e_i-\e_{n+1}\right)^T\matX_B\matR}$; then, using the above bound and eqn.~(\ref{eqn:pd4}), we get
%
%
$\TNormS{\left(\e_{\tilde{i}}-\e_{n+1}\right)^T\matX_B\matR} \leq \left(1+\TNorm{\matE_B}\right)\TNormS{\left(\e_{\tilde{i}}-\e_{n+1}\right)^T\matX_B} 
 \leq \left(1+\TNorm{\matE_B}\right)B^2.$
%
%
Thus, there exists a ball centered at $\e_{n+1}^T\matX_B\matR$ (the projected center of the minimal radius ball in the original space) with radius at most $\sqrt{1+\TNorm{\matE_B}}B$ that encloses all the points in the sampled space. Recall that $\tilde{B}$ is defined as the radius of the minimal radius ball that encloses all points in sampled subspace; clearly,
$\tilde{B}^2 \leq \left(1+\TNorm{\matE_B}\right)B^2.$
We can now use Lemma~\ref{lem:ssp-app} on $\matV_B$ to conclude that (using $\rho_B \leq \rho + 1$) $\TNorm{\matE_B} \leq \epsilon.$ 
\end{proof}

\section{Experiments}
We compared BSS and leverage-score sampling with RFE (\cite{Guyon02}), LPSVM (\cite{LPSVM}), rank-revealing QR factorization (RRQR), random feature selection and full-data without feature selection on synthetic and real-world datasets. For the supervised case, we first run SVM on the training set, then run a feature selection method on the support-vector set and recalibrate the model using the support vector-set. For unsupervised feature selection, we perform feature selection on the training set. For LPSVM, we were not able to control the number of features and report the out-of-sample error using the features output by the algorithm. We \textbf{did not} extrapolate the values of out-of-sample error for LPSVM. We repeated random feature selection and leverage-score sampling five times. We performed ten-fold cross-validation and repeated it ten times. For medium-scale datasets like TechTC-300 we do not perform approximate BSS. For large-scale datasets like Reuters-CCAT (\cite{DL04b}) we use the approximate BSS method as described in Section~\ref{subsec:approx_bss}. We used LIBSVM (\cite{Chang11}) as our SVM solver for medium-scale datasets and LIBLINEAR (\cite{Fan08}) for large-scale datasets. We do not report running times in our experiments, since feature selection is an offline-task. We implemented all our algorithms in MATLAB R2013b on an Intel i-7 processor with 16GB RAM. BSS and leverage-score sampling are better than LPSVM and RRQR and comparable to RFE on 49 TechTC-300 datasets. 

\subsection{Other Feature Selection Methods}
In this section, we describe other feature-selection methods with which we compare BSS and Leverage-score sampling.

\noindent\textbf{Rank-Revealing QR Factorization (RRQR):}
Within the numerical linear algebra community, subset selection algorithms use the so-called Rank Revealing QR (RRQR) factorization.
Let $\matA$ be a $n\times d$ matrix with $\left(n<d\right)$ and an integer $k \left(k<d\right)$ and assume partial QR factorizations of the form $$\matA\matP = \matQ \begin{pmatrix} \matR_{11} & \matR_{12} \\ \mathbf{0} & \matR_{22} \end{pmatrix},$$ 
where $\matQ \in \mathbb{R}^{n\times n}$ is an orthogonal matrix, $\matP  \in \mathbb{R}^{d\times d}$ is a permutation matrix, $\matR_{11} \in \mathbb{R}^{k\times k}, \matR_{12} \in \mathbb{R}^{k\times (d-k)},  \matR_{22} \in \mathbb{R}^{(d-k)\times (d-k)}$ The above factorization is called a RRQR factorization if $\sigma_{min}\left( \matR_{11}\right) \geq \sigma_k\left(\matA\right)/p(k,d)$, $\sigma_{max}\left( \matR_{22}\right) \leq \sigma_{min}(\matA) p(k,d),$ where $p(k,d)$ is a function bounded by a low-degree polynomial in $k$ and $d$. The important columns are given by $\matA_1= \matQ \begin{pmatrix} \matR_{11} \\ \mathbf{0} \end{pmatrix}$ and $\sigma_i \left(\matA_1\right) = \sigma_i\left(\matR_{11}\right)$ with $1\leq i \leq k.$ We perform feature selection using RRQR by picking the important columns which preserve the rank of the matrix.

\noindent \textbf{Random Feature Selection:}
We select features uniformly at random without replacement which serves as a baseline method. To get around the randomness, we repeat the sampling process five times.

\noindent\textbf{Recursive Feature Elimination:} 
Recursive Feature Elimination (RFE), \cite{Guyon02} tries to find the best subset of features which leads to the largest margin of class separation using SVM. At each iteration, the algorithm greedily removes the feature that decreases the margin the least, until the required number of features remain. At each step, it computes the weight vector and removes the feature with smallest weight. RFE is computationally expensive for high-dimensional datasets. Therefore, at each iteration, multiple features are removed to avoid the computational bottleneck.
 
\noindent \textbf{LPSVM:} 
The feature selection problem for SVM can be formulated in the form of a linear program. LPSVM \cite{LPSVM} uses a fast Newton method to solve this problem and obtains a sparse solution of the weight vector, which is used to select the features.

\subsection{BSS Implementation Issues}
At every iteration, there can be multiple columns which satisfy the condition $\mathcal{U}\left(\v_i,\delta_U,\matA_{\tau},U_{\tau} \right) \leq \mathcal{L}\left(\v_i,\delta_L, \matA_{\tau},L_{\tau} \right).$ \cite{BSS09} suggest picking any column which satisfies this constraint. Selecting a column naively leaves out important features required for classification. Therefore, we choose the column $\v_i$ which has not been selected in previous iterations and whose Euclidean-norm is highest among the candidate set. Columns with zero Euclidean norm never get selected by the algorithm.\\
In our implementation, we do not use the
data center as one of the inputs (since computing the center involves
solving a quadratic program).
\begin{table*}[!ht]
\caption{\small Most frequently selected features using the synthetic dataset.}
\label{tab:synth}
\begin{center}
\begin{small}
\begin{tabular}{|c||c|c||c|c|c|}
\hline
&\multicolumn{2}{c||}{$r_1=30$} &\multicolumn{2}{c|}{$r_1=40$} \\
\hline
&$k=40$ & $k=50$   &$k=40$ & $k=50$ \\
\hline
BSS	& \textbf{40}, 39, 34, 36, 37 & \textbf{50}, 49, 48, 47, 44  & \textbf{40}, 39, 34, 37, 36 &\textbf{50}, 49, 48, 47, 44 \\
\hline
Lvg    &\textbf{40}, 39, 37, 36, 34 &\textbf{50}, 49, 48, 47, 46  &\textbf{40}, 39, 37, 31, 32 &\textbf{50}, 49, 48, 31, 47 \\
\hline
RFE   & \textbf{40}, 39, 38, 37, 36 & \textbf{50}, 49, 48, 47, 46 &\textbf{40}, 39, 38, 37, 36 & \textbf{50}, 49, 48, 47, 46 \\
\hline
LPSVM &\textbf{40}, 39, 38, 37, 34 &\textbf{50}, 49, 48, 43, 40  &\textbf{40}, 39, 38, 37, 34 &\textbf{50}, 49, 48, 43, 40 \\
\hline
RRQR   &\textbf{40}, 30, 29, 28, 27  &  \textbf{50}, 30, 29, 28, 27  &  \textbf{40}, 39, 38, 37, 36  &  \textbf{50}, 40, 39, 38, 37  \\
\hline
\end{tabular}
\end{small}
\end{center}
\end{table*}

\subsection{Experiments on Supervised Feature Selection}
\noindent \textbf{Synthetic Data:}
We generate synthetic data as described in \cite{Bhat04}, where we control the number of relevant features in the dataset. The dataset has $n$ data-points and $d$ features. The class label $y_i$ of each data-point was randomly chosen to be 1 or -1 with equal probability. The first $k$ features of each data-point $\x_i$ are the relevant features and are drawn from $y_i \mathcal{N}\left(-j,1\right)$ distribution, where $\mathcal{N}\left(\mu,\sigma^2\right)$ is a random normal distribution with mean $\mu$ and variance $\sigma^2$ and $j$ varies from 1 to $k$. The remaining $(d-k)$ features are chosen from a $\mathcal{N}(0,1)$ distribution and are noisy features. By construction, among the first $k$ features, the $kth$ feature has the most discriminatory power, followed by $(k-1)th$ feature and so on. We set $n$ to 200 and $d$ to 1000. We set $k$ to 40 and 50 and ran two sets of experiments. We set the value of $r_1$, i.e. the number of features selected, to 30 and 40 for all experiments. We performed ten-fold cross-validation and repeated it ten times. We used LIBSVM with default settings and set $C=1$. We compared with the other methods. The mean out-of-sample error was 0 for all methods for both $k=40$ and $k=50$. Table~\ref{tab:synth} shows the set of five most frequently selected features by the different methods for one such synthetic dataset. The top features picked up by the different methods are the relevant features by construction and also have good discriminatory power. This shows that supervised BSS and leverage-score sampling are as good as any other method in terms of feature selection. We repeated our experiments on ten different synthetic datasets and each time, the five most frequently selected features were from the set of relevant features. Thus, by selecting only 3\% -4\% of all features, we show that we are able to obtain the most discriminatory features along with good out-of-sample error using BSS and leverage-score sampling.\\

\begin{figure}[!ht]
\begin{center}
\includegraphics[height = 65mm,width=\columnwidth,clip]{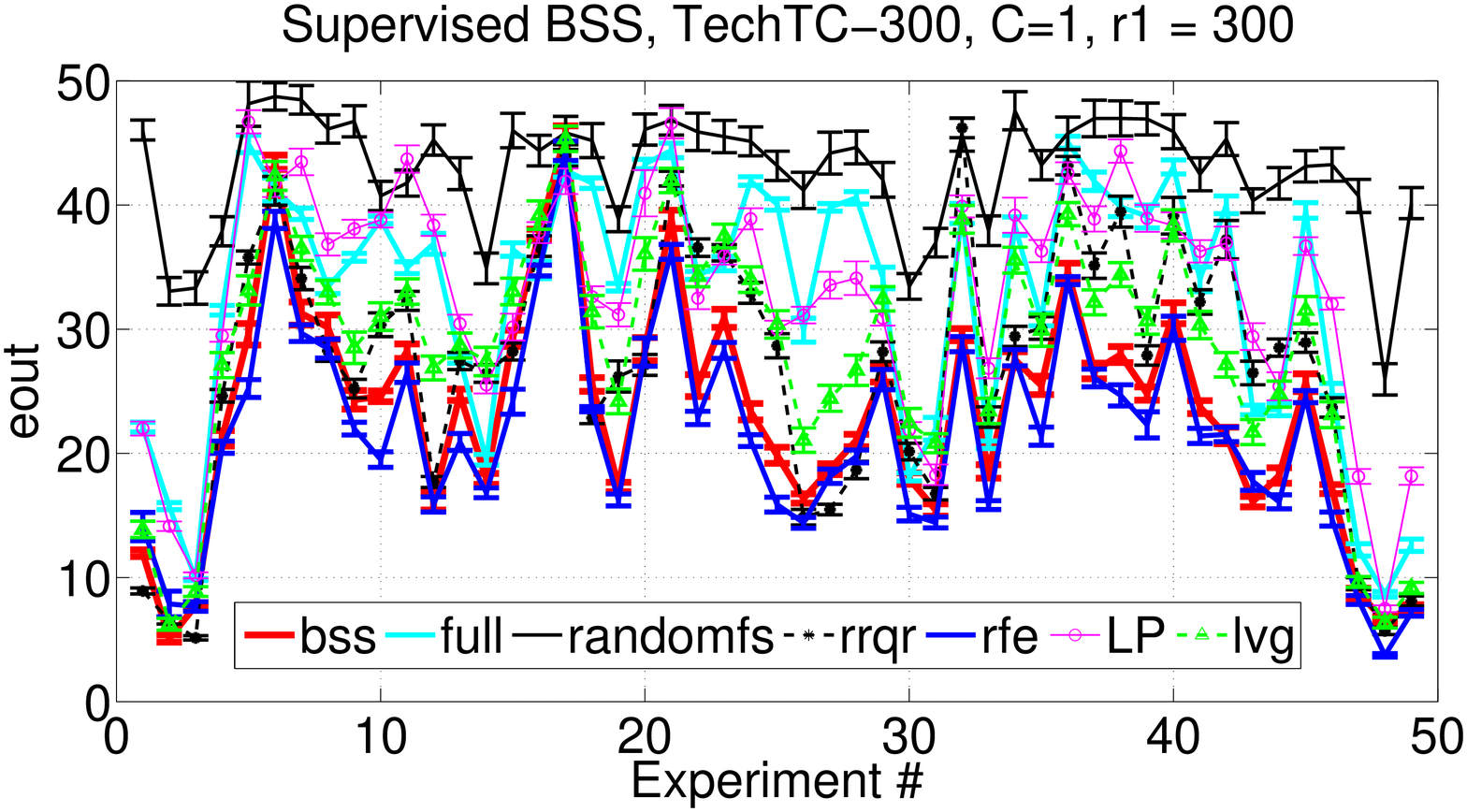}
\includegraphics[height = 65mm,width=\columnwidth,clip]{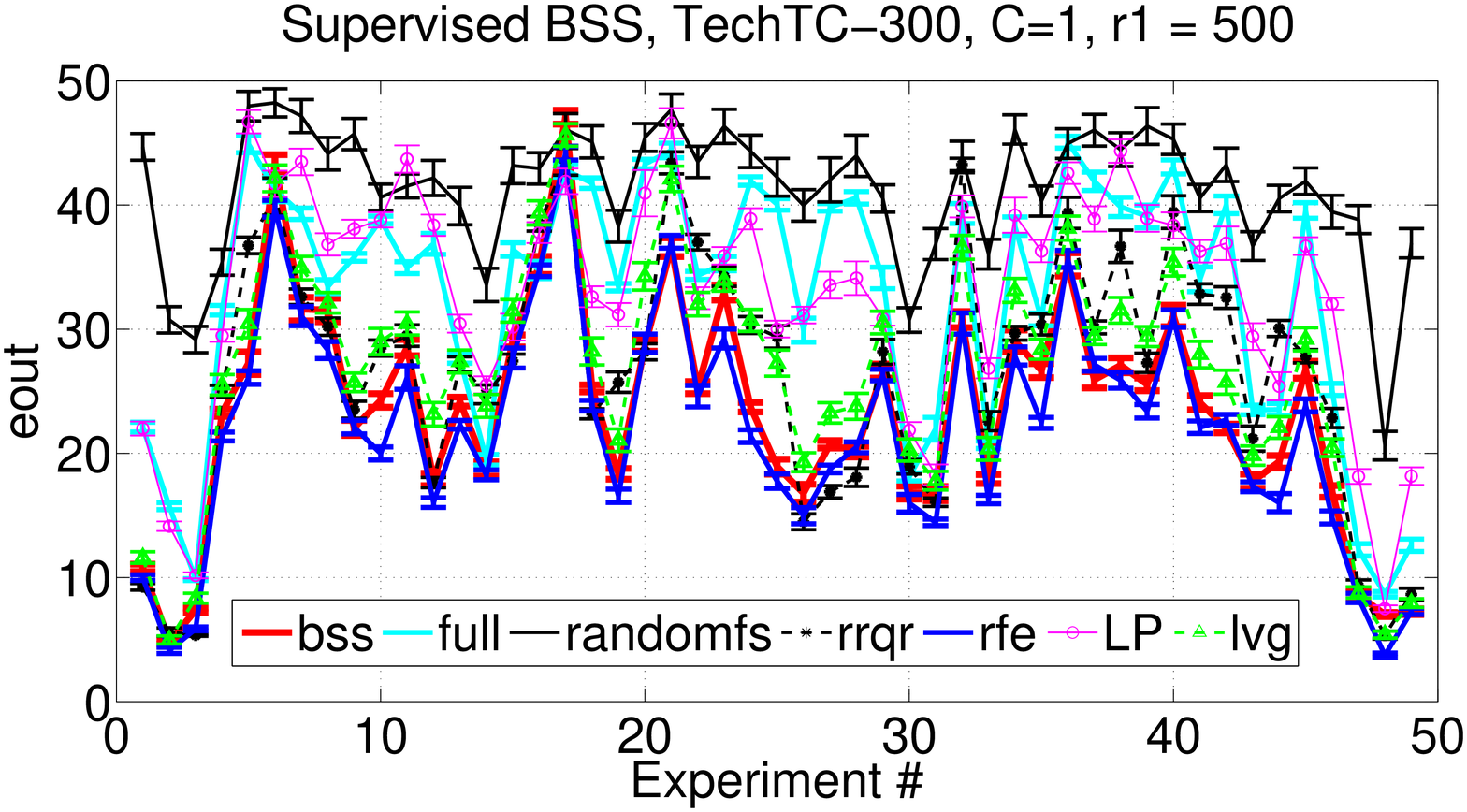}
Supervised Feature Selection \\[5pt]
\end{center}
\caption{\small Plots of out-of-sample error of Supervised BSS and leverage-score compared with other methods for 49 TechTC-300 documents averaged over ten ten-fold cross validation experiments. Vertical bars represent standard deviation.}
\label{fig:techtc_eout}
\end{figure}

\begin{figure}[!ht]
\begin{center}
\includegraphics[height = 65mm,width=\columnwidth,clip]{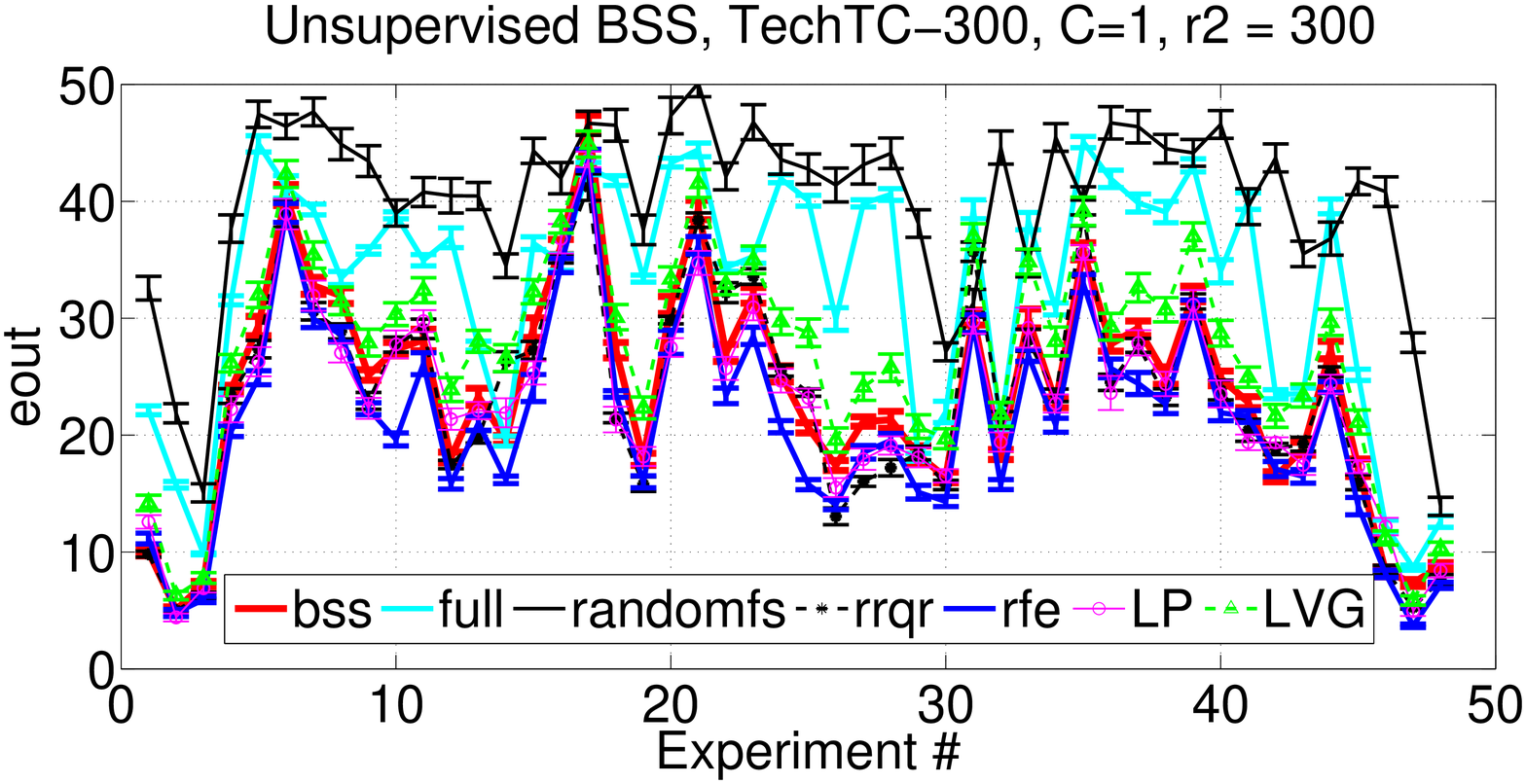}
\includegraphics[height = 65mm,width=\columnwidth,clip]{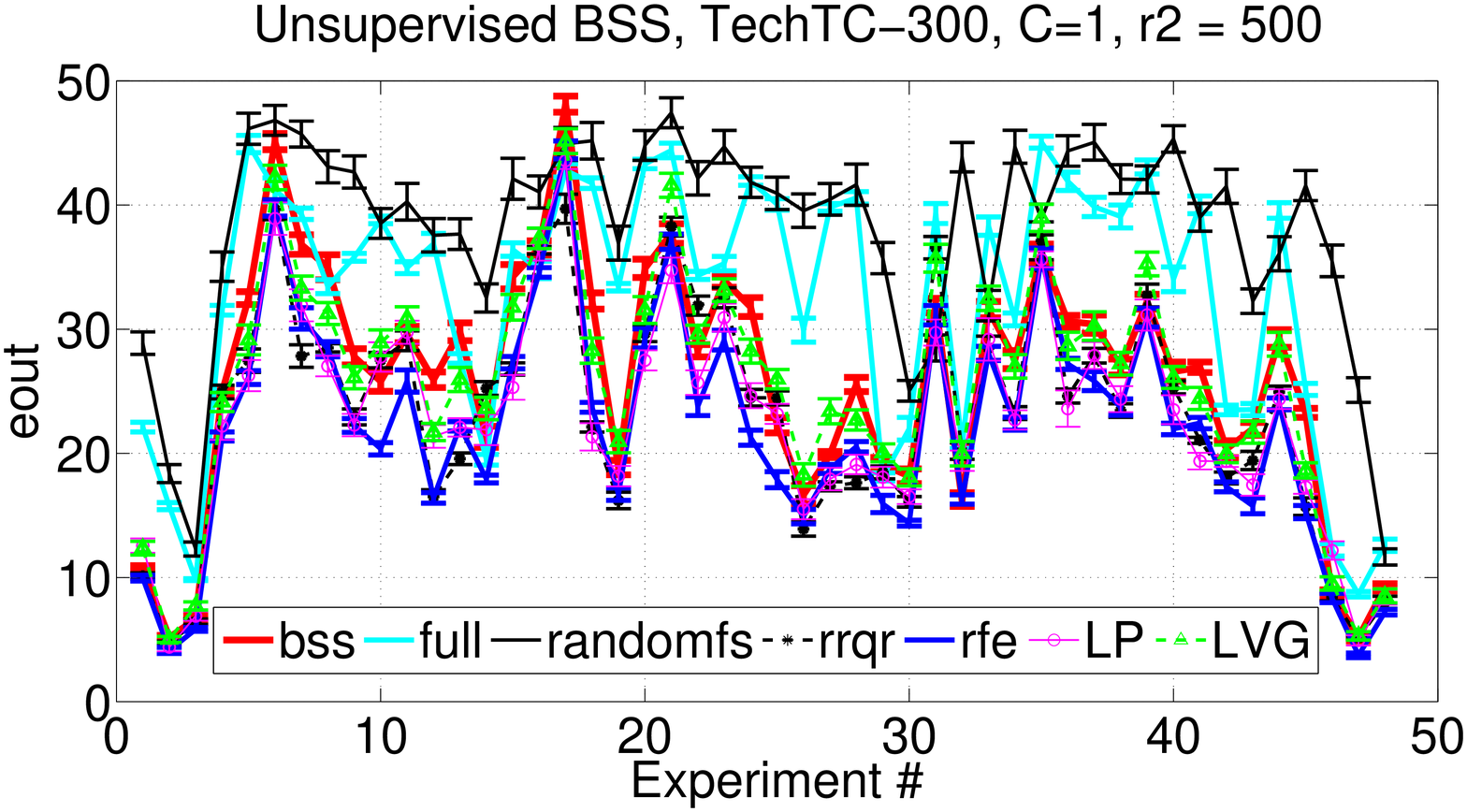}
Unsupervised Feature Selection
\end{center}
\caption{\small Plots of out-of-sample error of Unsupervised BSS and leverage-score compared with other methods for 49 TechTC-300 documents averaged over ten ten-fold cross validation experiments. Vertical bars represent standard deviation.}
\label{fig:techtc_eout_unsup}
\end{figure}

\begin{figure}[!htb]
\begin{center}
\includegraphics[height = 65mm,width=\columnwidth,clip]{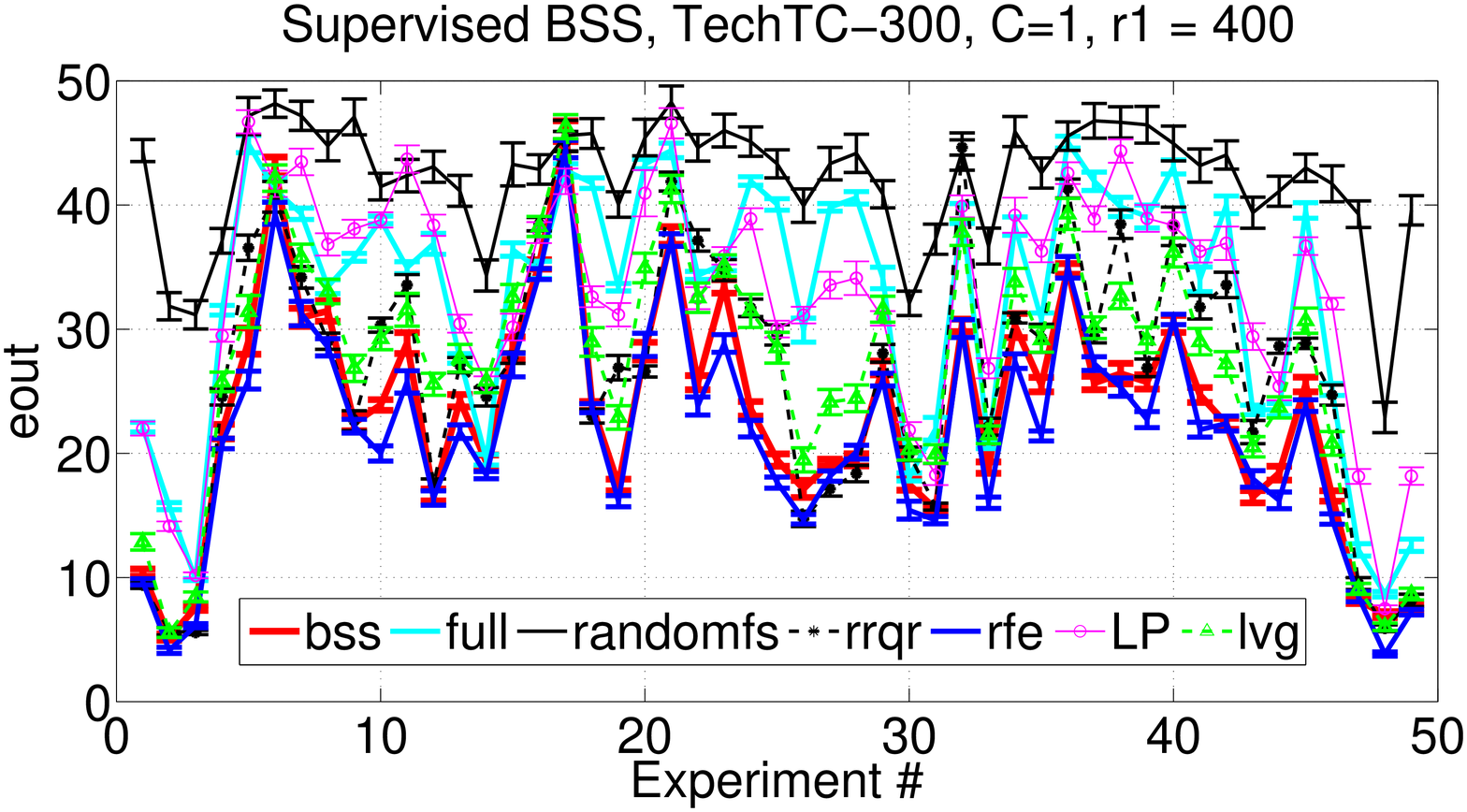}
Supervised Feature Selection \\[5pt]
\includegraphics[height = 65mm,width=\columnwidth,clip]{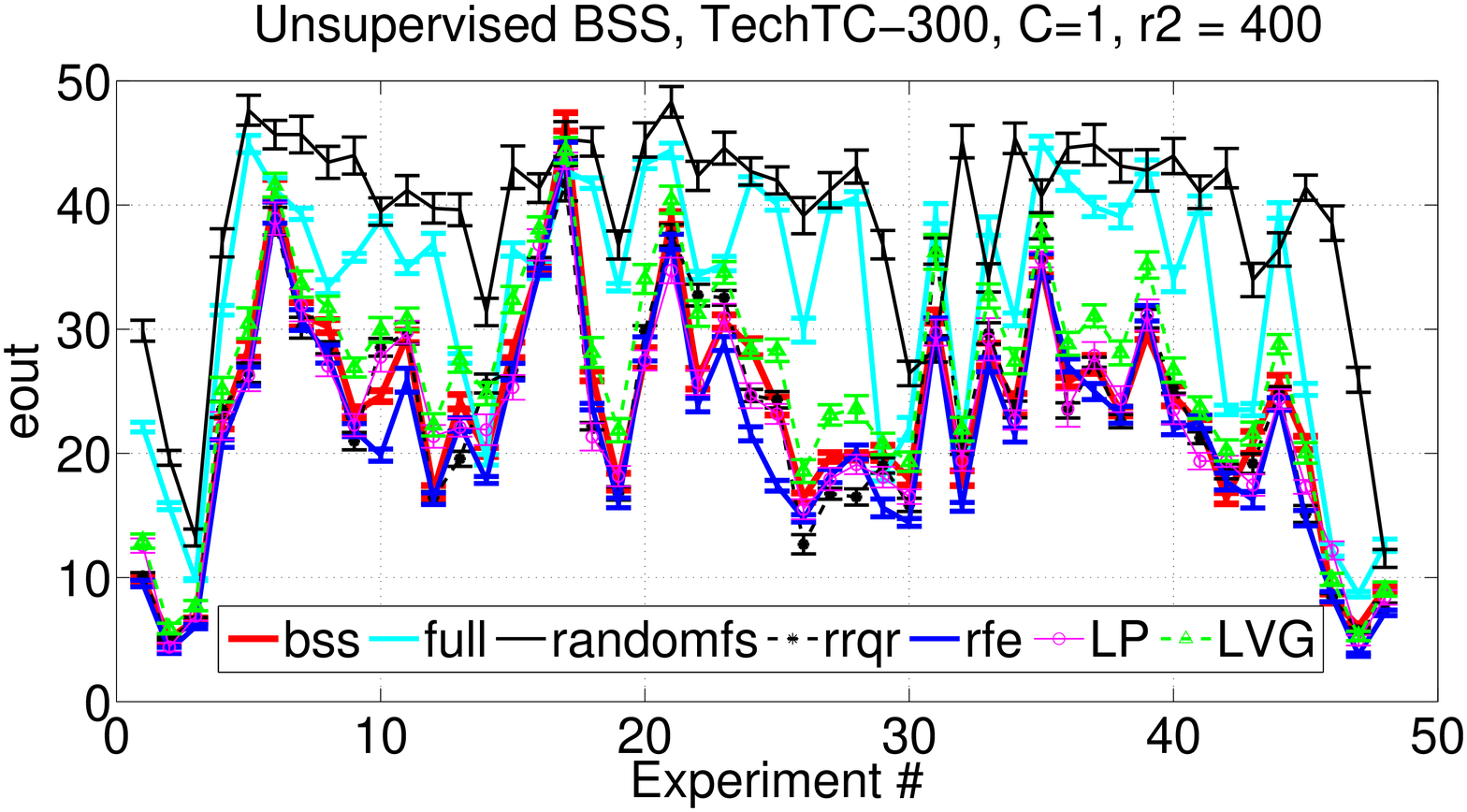}

Unsupervised Feature Selection
\end{center}
\caption{\small Plots of out-of-sample error of Supervised and Unsupervised BSS and leverage-score compared with other methods for 49 TechTC-300 documents averaged over ten ten-fold cross validation experiments. Vertical bars represent standard deviation.}
\label{fig:techtc_eout_400}
\end{figure}

\begin{table*}[!ht]
\centering
\caption{A subset of the TechTC matrices of our study}
\label{tab:filenames}
\begin{small}
\begin{tabular}{|l|l|l|}
\hline
  &\textbf{id1}  &\textbf{id2}\\ \hline
(i) &Arts: Music: Styles: Opera &Arts: Education: Language: Reading Instructions \\ \hline
(ii) &Arts: Music: Styles: Opera  &US Navy: Decommisioned Attack Submarines \\ \hline
(iii) &US: Michigan: Travel \& Tourism &Recreation:Sailing Clubs: UK \\ \hline
(iv)  &US: Michigan: Travel \& Tourism &Science: Chemistry: Analytical: Products \\ \hline
 (v) &US: Colorado: Localities: Boulder &Europe: Ireland: Dublin: Localities \\ \hline	
\end{tabular}
\end{small}
\end{table*}
%
\begin{table*}[!htb]
\begin{center}
\caption{Frequently occurring terms of the five TechTC-300 datasets of Table~\ref{tab:filenames} selected by supervised BSS and Leverage-score sampling.}
\label{tab:techtc_words}
\begin{small}
\begin{tabular}{|l|l|l|}
\hline
 &\textbf{BSS} &\textbf{Leverage-score Sampling} \\ \hline
(i) &reading, education, opera, frame & reading, opera, frame, spacer\\ \hline 
(ii) &submarine, hullnumber, opera, tickets &hullnumber, opera, music, tickets \\ \hline 
(iii) &michigan, vacation, yacht, sailing &sailing, yacht, michigan, vacation \\ \hline 
(iv) &chemical, michigan, environmental, asbestos &travel, vacation, michigan, services, environmental \\ \hline
(v) &ireland, dublin, swords, boulder, colorado &ireland, boulder, swords, school, grade \\ \hline
\end{tabular}
\end{small}
\end{center}
\end{table*}
\begin{table*}[!ht]
\caption{\small Results of Approximate BSS. CCAT (train / test): (23149 / 781265), d=47236. Mean and standard deviation (in parenthesis) of out-of-sample error. Eout of full-data is 8.66 $\pm$ 0.54.}
\label{tab:eout_rcv_svset}
\begin{center}
\begin{small}
\begin{tabular}{|c|c|c|c|c|c|c|}
\hline
Eout &$r_1$ &BSS ($t=128$) &BSS ($t=256$) &RRQR &RFE &LPSVM \\
\hline
CCAT &1024  &10.53 (0.59)    &10.35 (0.64)      &9.97 (0.62)          &8.92 (0.57)      &9.97 (0.55) \\
\hline
CCAT &2048  &11.13 (0.66)  &10.63 (0.62)   &10.04 (0.66)  &8.56 (0.54)  &9.97 (0.55) \\
\hline
\end{tabular}
\end{small}
\end{center}
\end{table*}

\noindent \textbf{TechTC-300:}
For our first real dataset, we use 49 datasets of TechTC-300 (\cite{David04}) which contain binary classification tasks. Each data matrix consists of 150-280 documents (the rows of the data matrix), and each document is described with respect to 10,000-50,000 words (features are columns of the matrix). We removed all words with at most four letters from the datasets. We set the parameter $C=1$ in LIBSVM and used default settings. We tried different values of $C$ for the full-dataset and the out-of-sample error averaged over 49 TechTC-300 documents did not change much, so we report the results of $C=1$. 
We set the number of features to 300, 400 and 500.  Figs~\ref{fig:techtc_eout} and ~\ref{fig:techtc_eout_400} show the out-of-sample error for the 49 datasets for $r1=300$, 400 and 500. For the supervised feature selection, BSS is comparable to RFE and leverage-score sampling and better than RRQR, LPSVM, full-data and uniform sampling in terms of out-of-sample error. For LPSVM, the number of selected features averaged over 49 datasets was greater than 500, but it performed worse than BSS and leverage-score sampling.
Leverage-score sampling is comparable to BSS and better than RRQR, LPSVM, full-data and uniform sampling and slightly worse than RFE.\\
\noindent We list the most frequently occurring words selected by supervised BSS and leverage-score for the $r_1=300$ case for five TechTC-300 datasets  over 100 training sets. Table~\ref{tab:filenames} shows the names of the five TechTC-300 document-term matrices. The words shown in Table ~\ref{tab:techtc_words} were selected in all cross-validation experiments for these five datasets. The words are closely related to the categories to which the documents belong, which shows that BSS and Leverage-score sampling select important features from the support-vector matrix. For example, for the document-pair $(ii)$, where the documents belong to the category of ``Arts:Music:Styles:Opera" and ``US:Navy: Decommisioned Attack Submarines", the BSS algorithm selects submarine, hullnumber, opera, tickets and Leverage-score sampling selects hullnumber, opera, music, tickets which are closely related to the two classes. Thus, we see that using only 2\%-4\% of all features we are able to obtain good out-of-sample error.
%
\subsection{Experiments on Unsupervised Feature Selection}
For the unsupervised feature selection case, we performed experiments on the same 49 TechTC-300 datasets and set $r_2$ to 300, 400 and 500.We include the results for $r_2=300$ and $r_2=500$ in Figs~\ref{fig:techtc_eout_unsup} and ~\ref{fig:techtc_eout_400}. For LPSVM, the number of selected features averaged over 49 datasets was close to 300. In the unsupervised case, BSS and leverage-score sampling are comparable to each other and also comparable to the other methods RRQR, LPSVM and RFE. These methods are better than random feature selection and full-data without feature selection. This shows that unsupervised BSS and leverage-score sampling are competitive feature selection algorithms.\\
\noindent Supervised feature selection is comparable to unsupervised feature selection for BSS, Leverage-score sampling and RFE, while unsupervised RRQR and LPSVM are better than their supervised versions. Running BSS (or leverage-score sampling) on the support-vector set is equivalent to running BSS (or leverage-score sampling) on the training data. However, RRQR and LPSVM are primarily used as unsupervised feature selection techniques and so they perform well in that setting. RFE is a heuristic based on SVM and running RFE on the support-vectors is equivalent to running RFE on the training data.

\subsection{Approximate BSS}
\label{subsec:approx_bss}
We describe a heuristic to make supervised BSS scalable to large-scale datasets. For datasets with large number of support vectors, we premultiply the support vector matrix $\matX$ with a random gaussian matrix $\matG \in \mathbb{R}^{t\times p}$ to obtain $\hat{\matX}=\matG\matX$ and then use BSS to select features from the right singular vectors of $\hat{\matX}$. The right singular vectors of $\hat\matX$ closely approximates the right singular vectors of $\matX$. Hence the columns selected from $\hat{\matX}$ will be approximately same as the columns selected from $\matX$. We include the algorithm as Algorithm~\ref{alg:approx_bss}.
We performed experiments on a subset of Reuters Corpus dataset, namely reuters-CCAT, which contains binary classification task. 
We used the L2-regularized L2-loss SVM formulation in the dual form in LIBLINEAR and set the value of $C$ to 10. 
We experimented with different values of $C$ on the full-dataset, and since there was small change in classificaton accuracy among the different values of $C$, we chose $C=10$ for our experiments. We pre-multiplied the support vector matrix with a random gaussian matrix of size $t\times p$, where $p$ is the number of support vectors and $t$ was set to 128 and 256. We repeated our experiments five times using five different random gaussian matrices to get around the randomness. We set the value of $r_1$ in BSS to 1024 and 2048. LPSVM selects 1898 features for CCAT. Table~\ref{tab:eout_rcv_svset} shows the results of our experiments. We observe that the out-of-sample error using approx-BSS is close to that of RRQR and comparable to RFE, LPSVM and full-data. The out-of-sample error of approx-BSS decreases with an increase in the value of $t$. This shows that we get a good approximation of the right singular vectors of the support vector matrix with an increase in number of projections. 

\begin{algorithm}[!hbt]
\begin{framed}

\textbf{Input:} Support vector matrix $\matX \in \mathbb{R}^{p\times d}$, $t, r.$ \\
\textbf{Output:} Matrices $\matS \in \mathbb{R}^{d\times r}, \matD \in \mathbb{R}^{r\times r}$.

\begin{enumerate}

\item Generate a random Gaussian matrix, $\matG \in \mathbb{R}^{t\times p}$.
\item Compute $\hat{\matX}=\matG\matX$.
\item Compute right singular vectors $\matV$ of $\hat\matX$ using SVD.
\item Run Algorithm~\ref{alg:alg_ssp} using $\matV$ and $r$ as inputs and get matrices $\matS$ and $\matD$ as outputs.

\item Return $\matS$ and $\matD.$
\end{enumerate}
\end{framed}
\caption{Approximate BSS}
\label{alg:approx_bss}
\end{algorithm}

\section{Conclusions}
Our simple method of extending an unsupervised feature 
selection method into a supervised one for SVM not only has a 
provable guarantee, but also works well empirically:
BSS and leverage-score sampling are comparable and often better than prior state-of-the-art 
feature selection methods for SVM, and those methods 
don't come with guarantees. \\
Our supervised sparsification algorithms only preserve the
margin for the support vectors in the
feature space. We do not make any claims about the margin of the
full data in the feature space constructed from the support vectors.
This appears challenging and it would be interesting to 
see progress made in this direction: can one
choose \math{O(\# \text{support vectors})} features for the
full data set and obtain provable guarantees on the margin and data radius?
There have been recent advances in approximate leverage-scores for large-scale datasets. A possible future work in this direction would be to see if those algorithms indeed work well with SVMs. 

\section{Acknowledgements}
PD and SP are supported by NSF IIS-1447283 and IIS-1319280 respectively.

\begin{small}
\bibliographystyle{unsrt}
\bibliography{references}
\end{small}

\end{document}